  \providecommand\BibTeX{{%
    \normalfont B\kern-0.5em{\scshape i\kern-0.25em b}\kern-0.8em\TeX}}}
\newtheorem{theorem}{Theorem}[section]
\newtheorem{corollary}[theorem]{Corollary}
\newtheorem{proposition}{Proposition}
\newcommand{\eps}{\varepsilon}
\newcommand{\tr}{\mathsf{Tr}}
\newcommand{\cost}{\mathrm{Cost}}
\newcommand{\GNN}{\mathsf{GNN}}
\newcommand{\real}     {\mathbb{R}}
\newcommand{\norm}[1]  {\| #1 \|}
\newcommand{\cancel}[1]{}
\newcommand{\diag}{\mathsf{diag}}
\renewcommand{\paragraph}[1]{\medskip \noindent{\bf #1.}}
\begin{document}
\fancyhead{}
\title{Scaling Up Graph Neural Networks Via Graph Coarsening}

\author{Zengfeng Huang}
\affiliation{School of Data Science\\
\institution{Fudan University}
\country{}}
\email{huangzf@fudan.edu.cn}
\authornote{Corresponding author}

\author{Shengzhong Zhang}
\affiliation{School of Data Science\\
\institution{Fudan University}
\country{}}
\email{szzhang17@fudan.edu.cn}

\author{Chong Xi}
\affiliation{School of Data Science\\
\institution{Fudan University}
\country{}}
\email{cxi19@fudan.edu.cn}

\author{Tang Liu}
\affiliation{
\institution{Fudan University}
\country{}}
\email{cnliutang@gmail.com}

\author{Min Zhou}
\affiliation{
\institution{Huawei Technologies Co. Ltd}
\country{}}
\email{zhoumin27@huawei.com}

\begin{abstract}
	Scalability of graph neural networks remains one of the major challenges in graph machine learning. Since the representation of a node is computed by recursively aggregating and transforming representation vectors of its neighboring nodes from previous layers, the receptive fields grow exponentially, which makes standard stochastic optimization techniques ineffective. Various approaches have been proposed to alleviate this issue, e.g., sampling-based methods and techniques based on pre-computation of graph filters. 
	
	In this paper, we take a different approach and propose to use graph coarsening for scalable training of GNNs, which is generic, extremely simple and has sublinear memory and time costs during training. We present extensive theoretical analysis on the effect of using coarsening operations and provides useful guidance on the choice of coarsening methods. Interestingly, our theoretical analysis shows that coarsening can also be considered as a type of regularization and may improve the generalization.  Finally, empirical results on real world datasets show that, simply applying off-the-shelf coarsening methods, we can reduce the number of nodes by up to a factor of ten without causing a noticeable downgrade in classification accuracy.
	
\end{abstract}

\begin{CCSXML}
<ccs2012>
<concept>
<concept_id>10010147.10010257.10010293.10010319</concept_id>
<concept_desc>Computing methodologies~Learning latent representations</concept_desc>
<concept_significance>500</concept_significance>
</concept>
<concept>
<concept_id>10010147.10010257.10010282.10011305</concept_id>
<concept_desc>Computing methodologies~Semi-supervised learning settings</concept_desc>
<concept_significance>300</concept_significance>
</concept>
<concept>
<concept_id>10010147.10010257.10010293.10010294</concept_id>
<concept_desc>Computing methodologies~Neural networks</concept_desc>
<concept_significance>300</concept_significance>
</concept>
</ccs2012>
\end{CCSXML}

\ccsdesc[500]{Computing methodologies~Learning latent representations}
\ccsdesc[300]{Computing methodologies~Semi-supervised learning settings}
\ccsdesc[300]{Computing methodologies~Neural networks}

\keywords{Graph Coarsening; Graph Neural Networks; Scalable Training}


\maketitle

\section{Introduction}
In the recent few years, graph neural network (GNN) has emerged as a major tool for graph machine learning \citep{bruna2013spectral,defferrard2016convolutional,monti2017geometric,kipf2016semi,hamilton2017inductive,velickovic2017graph,liu2020towards,chen2020simple,klicpera2018predict}, which has found numerous applications in scenarios with explicit or implicit graph structures, e.g., \cite{dinella2020hoppity, wei2020lambdanet,paliwal2019reinforced,ying2018graph, zhang2019inductive,li2019encoding,pfaff2020learning}. Despite the tremendous success, the difficulty of scaling up GNNs to large graphs remains one of the main challenges, which limits their usage in large-scale industrial applications. In traditional machine learning settings, the loss function of the model can be decomposed into the individual sample contributions, and hence stochastic optimization techniques working with mini-batches can be employed to tackle training set that is much larger than the GPU memory. However, GNN computes the representation of a node recursively from its neighbors, making the above strategy non-viable, as the loss corresponding to each sample in a $\ell$-layer GNN depends on the subgraph induced by its $\ell$-hop neighborhood, which grows exponentially with $\ell$. Therefore, full-batch gradient descent is often used for training GNNs \cite{kipf2016semi,velickovic2017graph}, but this does not scale to large graphs due to limited GPU memory. 

Recently, a large body of research work studies this issue and various techniques have been proposed to improve the scalability of GNNs.  One prominent direction is to decouple the interdependence between nodes hence reducing the receptive fields. Pioneered by \cite{hamilton2017inductive}, layer-wise sampling combined with mini-batch training has proved to be a highly effective strategy, and since then, several follow-up works try to improve this baseline with optimized sampling process, better stochastic estimations, and other extensions \cite{chen2018fastgcn,chen2018stochastic,zou2019layer,cong2020minimal,ramezani2020gcn}. Another related technique is based on subgraph sampling, which carefully samples a small subgraph in each training iteration and then simply performs full-batch gradient descent on this subgraph \cite{zeng2019graphsaint,chiang2019cluster}.  In practice, performing random sampling from a large graph in each epoch requires many random accesses to the memory, which is not friendly to GPUs \cite{ramezani2020gcn}.

A second approach is largely motivated by \cite{wu2019simplifying}, in which the authors show that removing the nonlinear activations in GCN \cite{kipf2016semi} does not affect the accuracy by much on common benchmarks. The resulting model is simply a linear diffusion process followed by a classifier. Then the diffusion process can be pre-computed and stored, after which the classifier can be trained with naive stochastic optimization. Recently, this idea is extended to more general propagation rules akin to personalized Pagerank, and highly scalable algorithms for pre-computing the propagation process are investigated \cite{bojchevski2020scaling,chen2020scalable}. Although such methods often perform better than sampling-based techniques on popular benchmarks~\cite{chen2020scalable}, they only work for a restricted class of architectures: graph diffusion and nonlinear feature transformation are decoupled, which does not retain the full expressive power of GNNs \cite{xu2018powerful}.

\vspace{-0.3cm}
\paragraph{Our Contributions} In this paper, we investigate a simple and generic approach based on graph coarsening. In a nutshell, our method first applies an appropriate graph coarsening method, e.g., \cite{loukas2019graph}, which outputs a coarse graph with much smaller number of nodes and edges; then trains a GNN on this coarse graph; finally transfers the trained model parameters of this smaller model to the GNN defined on the original graph for making inference. Since, the training is only done on a much smaller graph, the training time and memory cost are \emph{sublinear}, while all previous methods have time complexity at least linear in the number of nodes \cite{chen2020scalable}. Moreover, full-batch gradient descent can be applied, which not only avoids doing random sampling on a large graph repeatedly, but is also much simpler than previous techniques, since any GNN model can be applied directly without changing the code.  Our contributions are summarized as follows.
\vspace{-0.1cm}
\begin{enumerate}
	\item A new method based on graph coarsening for scaling up GNN is proposed, which is generic, extremely simple and has sublinear training time and memory without using sampling.
	\item Extensive theoretical analysis is presented. We analyze the effect of coarsening operations on GNNs quantitatively and provides useful guidance on the choice of coarsening methods. Interestingly, our theoretical analysis shows that coarsening can also be considered as a type of regularization and may improve the generalization, which has been further verified by the empirical results.
	\item Empirical studies on real world datasets show that, simply applying off-the-shelf coarsening methods, we can reduce the number of nodes by up to a factor of ten without causing a noticeable downgrade in classification accuracy.
\end{enumerate}
\vspace{-0.1cm}
We remark that our methods and existing ones mentioned above are \emph{complementary techniques}, and can be easily combined to tackle truly industrial-scale graphs.
\section{Preliminaries}
\subsection{Graph and Matrix Notations}
In this paper, all graphs considered are undirected. A graph with node feature is denoted as $G=(V, E, X)$, where $V$ is the vertex set, $E$ is the edge set, and $X\in \real^{n\times f}$ is the feature matrix (i.e., the $i$-th row of $X$ is the feature vector of node $v_i$). Let $n=|V|$ and $m=|E|$ be the number of vertices and edges respectively. We use $A\in\{0,1\}^{n\times n}$ to denote the adjacency matrix of $G$, i.e., the $(i,j)$-th entry in $A$ is $1$ if and only if their is an edge between $v_i$ and $v_j$. The degree of a node $v_i$, denoted as $d_i$, is the number of edges incident on $v_i$. The degree matrix $D$ is a diagonal matrix and the its $i$-th diagonal entry is $d_i$.

For a $d$-dimensional vector $x$, $\norm{x}_2$ is the Euclidean norm of $x$. We use $x_i$ to denote the $i$th entry of $x$, and $\diag(x)\in\real^{d\times d}$ is a diagonal matrix such that the $i$-th diagonal entry is $x_i$.
We use $A_{i:}$ and $A_{:i}$ to denote the $i$-th row and column of $A$ respectively, and $A_{ij}$ for the $(i,j)$-th entry of $A$.
We use $\norm{A}_2$ to denote the spectral norm of $A$, which is the largest singular value of $A$, and $\norm{A}_F$ for the \emph{Frobenius Norm}, which is $\sqrt{\sum_{i,j}a_{i,j}^2}$. 
The trace of a square matrix $A$ is denoted by $\mathsf{Tr(A)}$, which is the sum of the diagonals in $A$. It is well-known that $\tr(A)$ is equal to the sum of its eigenvalues. 
For notational convenience, we always write $A_{P} \triangleq P^TAP$ for any matrix $P$ with the same number of rows as $A$. 

\subsection{Graph Laplacian and Graph Fourier Transformation} 
The Laplacian matrix of a graph $G$ is defined as $L_G=D-A$; when the underling graph $G$ is clear from the context, we omit the subscription and simply write $L$. A key property of $L$ is that its quadratic form measures the "smoothness" of a signal w.r.t. the graph structure, and thus is often used for regularization purposes. More formally, for any vector $x\in \real^n$, it is easy to verify that
\vspace{-0.1cm}
\begin{equation}\label{eqn:LPq}
	x^TLx = \sum_{i,j} A_{ij} (x_i-x_j)^2 = \sum_{(v_i,v_j) \in E} (x_i-x_j)^2.
\end{equation}
Here, $x$ can be viewed as a one-dimensional feature vector and $x^TLx$ measures the smoothness of features across edges. This can be extended to multi-dimensional features. For any matrix $X\in \real^{n\times d}$, where $X_i$ is the feature of the $i$-th node, then we have
\vspace{-0.1cm}
\begin{equation}\label{eqn:trace}
	\sum_{(v_i,v_j) \in E}  \|X_{i:}-X_{j:}\|^2 = \sum_{i,j} A_{ij} \|X_{i:}-X_{j:}\|^2 = 	\mathsf{Tr} (X^TLX).
\end{equation}
In many applications, the symmetric normalized version of $L$, i.e., $D^{-1/2} L D^{-1/2}$,  is the right matrix to consider, which is denoted as $\mathcal{N}$.  Since $\mathcal{N}$ is real symmetric, it can be diagonalized and it is known that all its eigenvalues are in the range $[0,2]$. Let $0=\lambda_1\le \cdots\le \lambda_n\le 2$ be the eigenvalues of $\mathcal{N}$ with corresponding eigenvectors $u_1,\cdots,u_n$ and let $\mathcal{N}=U\Lambda U^T=\sum_{i=1}^n \lambda_i u_i u_i^T$ be the eigen-decomposition. In graph signal processing, given an $n$-dimensional discrete signal $x\in \real^n$, its Graph Fourier Transformation (GFT) is $\hat{x} = U^T x$ \cite{kipf2016semi}. The corresponding eigenvalue of a Fourier mode is the frequency. 
From this perspective, the orthogonal projector to the low-frequency eigenspace acts as a low-pass filter, which only retains contents in the lower frequencies; on the other hand, a projector to the high-frequency space is a high-pass filter.


\subsection{Graph Neural Networks}
In each layer of a GNN, the representation of a node is computed by recursively aggregating and transforming representation vectors of its neighboring nodes from the last layer. One special case is the Graph Convolutional Network (GCN) \cite{kipf2016semi}, which aims to generalize CNN to graph-structured data. Kipf and Welling \cite{kipf2016semi} define graph convolution (GC) as 
$	Z =  \tilde{D}^{-1/2}\tilde{A}\tilde{D}^{-1/2} X W$,
where $\tilde{A} = A+I$, $\tilde{D}=D+I$, and $W$ is a learnable parameter matrix. GCNs consist of multiple convolution layers of the above form, with each layer followed by a non-linear activation. 
In \cite{klicpera2018predict}, the authors propose APPNP, which uses a propagation rules inspired from personalized Pagerank. More precisely, the APPNP model is defined as follows:
\begin{itemize}
	\item $Z^{(1)} =H\triangleq f(X,W)$ , where $f(X,W)$ is a neural network with parameter set $W$.
	\item $Z^{(k+1)} = (1-\beta) \tilde{D}^{-1/2}\tilde{A} \tilde{D}^{-1/2} Z^{(k)} + \beta H$, where $\beta \in(0,1]$ is a hyperparameter.
\end{itemize}

\section{Our Method}
\subsection{Graph Coarsening}\label{sec:partitionmatrix}
Given a graph $G=(V,E,X)$, the coarse graph is a smaller weighted graph $G' = (V', E', X', W)$ with edge weights $W$. Denote $n' \triangleq |V'|$ and $m'\triangleq |E'|$. $G'$ is obtained from the original graph by first computing a partition $P =\{C_1, C_2, \cdots, C_{n'}\}$ of $V$, i.e., the clusters $C_1\cdots C_{n'}$ are disjoint and cover all the nodes in $V$. Each cluster $C_i$ corresponds to a “super-node” in $G'$ and the “super-edge” connecting the super-nodes $C_i,C_j$ has weight equal to the total number of edges connecting nodes in $C_i$ to  $C_j$:
$W_{ij} = \sum_{u\in C_i, v\in C_j} A_{ij}.$

The partition can be represented by a matrix $\hat{P}\in\{0,1\}^{n\times k}$, with $\hat{P}_{ij} = 1$ if and only if vertex $i$ belongs to cluster $C_j$. So, each row of $P$ contains exactly one nonzero entry and columns of $P$ are pairwise orthogonal. Then $W=  A_{\hat{P}} \triangleq \hat{P}^T A \hat{P}$ and $A_{\hat{P}}$ is identified as the adjacency matrix of $G'$. Similarly, $D_{\hat{P}} \triangleq  \hat{P}^T D \hat{P}$ is the degree matrix of $G'$. Note that the number of edges in the coarse graph is also significantly smaller than $m$, as each super-edge combines many edges in the original graph. It means that the number of non-zero entries in the adjacency matrix $A_{\hat{P}}$ is much smaller than $A$.

Let $c_j, j=1,\cdots n'$ be the number of vertices in $C_j$, and $C\triangleq \diag(c_1,\cdots, c_k)$. The normalized version of $\hat{P}$ is ${P} \triangleq \hat{P}C^{-1/2}$, i.e., $P_{ij} = 1/\sqrt{c_j}$ if $v_i\in C_j$ and $0$ otherwise. It is easy to verify that ${P}$ has orthonormal columns, and thus ${P}^T{P} = I$. We use $\mathcal{P}$ to denote the set of all normalized partition matrices.
\vspace{0.5cm}
\subsection{Our Method}\label{sec:ourmethod}
\paragraph{The generic algorithm}
We mainly focus on the semi-supervised node classification setting, where we are given an attributed graph $G=(V,E,X)$ and labels for a small subset of nodes. Assume the number of classes is $l$. We use $Y\in\{0,1\}^{n\times l} $ to represent the label information: if $v_i$ is labeled, then $Y_{i:}$ is the corresponding one-hot indicator vector, otherwise $Y_{i:}=0$.  We use $\GNN_G(W)$ to denote the GNN model based on $G$. Given a loss function $\ell$, e.g., cross entropy, the loss of the model is denoted as $\ell(\GNN_G(W), Y)$. The training algorithm is to minimize the loss w.r.t. $W$. The time and memory costs of training are proportional to the size of $G$. To improve the computational costs, we first compute a coarse approximation of $G$, denoted as $G'$, via graph coarsening described above, then minimize the loss $\ell(\GNN_{G'}(W), Y')$ w.r.t. $W$. The optimal parameter matrix $W^*$ is then used in $\GNN_G()$ for prediction.

In the coarse graph, each node is a super-node corresponding to a cluster of nodes in the original graph. The feature vector of each super-node is the mean of the feature vectors of all nodes in the cluster, i.e., $X'=P^TX$. We set the label of each super-node similarly, i.e., $P^TY$. However, it is possible that the super-node contains nodes from more than one class. For this case, we pick the dominating label, i.e., apply a row-wise argmax operation on $P^TY$. In our experiments, we find that discarding such super-nodes with mixed labels often benefits the accuracy.  However, in general, more sophisticated aggregation schemes can be applied to suit the application at hand. See Algorithm~\ref{alg:generic} for the description of our framework. We remark that graph coarsening can be efficiently pre-computed on CPUs, where the main memory size could be much larger than GPUs.
\begin{algorithm}[h]
	\caption{Training GNN with Graph Coarsening }\label{alg:generic}
	\begin{algorithmic}[1]
		\Require 
		$G=(V,E,X)$; Labels $Y$;  Model $\GNN_G(W)$; Loss $\ell$; 
		\Ensure 
		Output trained weight matrix $W^*$
		\State Apply a graph coarsening algorithm on $G$, and output a normalized partition matrix $P$. 
		\State Construct the coarse graph $G'$ using P;
		\State Compute the feature matrix of $G'$ by $X'=P^TX$
		\State Compute the labels of $G'$ by $Y' = \arg\max (P^TY)$
		\State Train parameter $W$ to minimize the loss $\ell(\GNN_{G'}(W),Y')$ to obtain a optimal weight matrix $W^*$\\
		\Return $W^*$;
	\end{algorithmic}
\end{algorithm}
The coarse graph $G'$ is weighted and the number of nodes in each super-node may vary significantly. Thus, when constructing the smaller model $\GNN_{G'}(W)$, we sometimes need to revise the propagation scheme. Next, we give a slightly more general GC, which is motivated from our theoretical analysis in Section~\ref{sec:theory}.

\paragraph{Graph convolution on the coarse graph} 
We define the convolution operation on $G'$ as
$$Z= ({D}_{\hat{P}}+C)^{-1/2}({A}_{\hat{P}} + C)({D}_{\hat{P}}+C)^{-1/2} X' W.$$
Here we add $C$ instead of $I$ as in \cite{kipf2016semi} to reflect the relative size of each super-node, for which we will give a theoretical justification in Section~\ref{sec:theory}. Also, this definition includes the standard GC as a special case, i.e., when there is no coarsening, then $C=I$.
By definitions of $P$ and $C$, we have
$$\tilde{A}_{P} \triangleq P^T\tilde{A}P = P^T(A+I) P = A_P + I = C^{-1/2}A_{\hat{P}}C^{-1/2} +I,$$
$$\tilde{D}_{P} \triangleq P^T\tilde{D}P =P^T(D+I) P = D_P + I = C^{-1/2}D_{\hat{P}}C^{-1/2} +I.$$
Since $D_{\hat{P}}$ is diagonal, $\tilde{D}_{P} $ is further simplified to $C^{-1}D_{\hat{P}}+I= C^{-1}(D_{\hat{P}}+C)$. Then the coarse graph convolution is equivalent to
\begin{equation}\label{eqn:GCcoarse}
	Z= \tilde{D}_{{P}}^{-1/2} \tilde{A}_{P} \tilde{D}_{{P}}^{-1/2}X' W,
\end{equation}
which looks more similar to the standard GC.

\section{Theoretical Foundations}\label{sec:theory}
Note that, when $\beta\rightarrow 0$, the propagation step of APPNP is the same as GCN. So one can think of GCN as a model which stacks multiple single-step APPNP models, interlaced by non-linear activations. In this section, we provide rigorous analysis on how APPNP behaves on the coarse graph, present theoretical guarantees on the approximation errors of different coarsening methods, and make interesting connections to existing graph coarsening schemes.
We first provide a variational characterization of APPNP, from which we derive APPNP on the coarse graph.
\subsection{A Characterization of APPNP}
Let $Z^{(t)}$ be the output of the $t$-th layer in APPNP.
It can be shown that $Z^{t}$ converges to the solution to a linear system, see e.g., \cite{klicpera2018predict,zhou2004learning}.
\begin{proposition}
	$Z^{(\infty)}$ is the solution to the linear system
	\begin{equation}\label{eqn:pagerankLS}
		\left(I-(1-\beta) \tilde{D}^{-1/2}\tilde{A} \tilde{D}^{-1/2}\right)Z = \beta H\triangleq f(X,W). 
	\end{equation}
\end{proposition}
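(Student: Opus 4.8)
The plan is to recognize the APPNP recursion as a fixed-point iteration for an affine contraction map and then identify its unique fixed point with the solution of \eqref{eqn:pagerankLS}. Write $S \triangleq \tilde{D}^{-1/2}\tilde{A}\tilde{D}^{-1/2}$, so that the update rule reads $Z^{(k+1)} = (1-\beta) S Z^{(k)} + \beta H$ with $Z^{(1)} = H$, and the affine map is $T(Z) = (1-\beta)SZ + \beta H$.

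First I would establish that $\norm{S}_2 \le 1$. Since $\tilde{A} = A+I$ and $\tilde{D} = D+I$ are the adjacency and degree matrices of the graph $G$ with a self-loop added at every vertex, the matrix $I - S$ is exactly the symmetric normalized Laplacian of that graph, whose eigenvalues lie in $[0,2]$ (as recalled in Section~\ref{sec:theory}'s preliminaries). Hence the eigenvalues of $S$ lie in $[-1,1]$, and since $S$ is symmetric, $\norm{S}_2 \le 1$. Consequently $\norm{(1-\beta)S}_2 \le 1-\beta < 1$, so $I-(1-\beta)S$ is invertible; in particular \eqref{eqn:pagerankLS} has the unique solution $Z^\star = \beta\bigl(I-(1-\beta)S\bigr)^{-1}H$, which is a fixed point of $T$.

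Next I would show the iterates converge to $Z^\star$. Subtracting the fixed-point identity $Z^\star = (1-\beta)SZ^\star + \beta H$ from the update rule gives $Z^{(k+1)} - Z^\star = (1-\beta)S\,(Z^{(k)} - Z^\star)$, hence by induction $Z^{(k)} - Z^\star = \bigl((1-\beta)S\bigr)^{k-1}(H - Z^\star)$. Taking Frobenius norms and using $\norm{(1-\beta)S}_2 \le 1-\beta$ yields $\norm{Z^{(k)} - Z^\star}_F \le (1-\beta)^{k-1}\norm{H - Z^\star}_F \to 0$, so $Z^{(\infty)} = \lim_k Z^{(k)} = Z^\star$ solves \eqref{eqn:pagerankLS}. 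Equivalently, one can unroll the recursion into the truncated Neumann series $Z^{(k)} = \bigl((1-\beta)S\bigr)^{k-1}H + \beta\sum_{j=0}^{k-2}\bigl((1-\beta)S\bigr)^{j}H$ and let $k\to\infty$, using $\sum_{j\ge 0}\bigl((1-\beta)S\bigr)^{j} = \bigl(I-(1-\beta)S\bigr)^{-1}$.

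There is essentially no hard step here; the only point requiring care is the spectral bound $\norm{S}_2 \le 1$, which simultaneously guarantees that the linear system is well posed and that the contraction factor $1-\beta$ is strictly below $1$ — everything else is the standard Banach fixed-point / Neumann-series argument. One could alternatively invoke the known convergence of the personalized PageRank iteration \cite{zhou2004learning,klicpera2018predict}, but the self-contained contraction argument above is cleaner.
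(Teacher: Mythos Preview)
Your argument is correct. The paper does not actually supply a proof of this proposition: it simply states the result and points to \cite{klicpera2018predict,zhou2004learning}, adding only that the linear system is nonsingular (positive definite) for $\beta>0$ by \cite{chung1997spectral}. Your contraction/Neumann-series derivation is exactly the standard one underlying those references, so there is no substantive difference in approach---you have just filled in what the paper leaves to citation. One cosmetic note: the eigenvalue bound $[0,2]$ for the normalized Laplacian is recalled in the preliminaries (Section~2), not in Section~\ref{sec:theory}.
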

It is known that the above linear system is non-singular (in fact positive definite) when $\beta$ is strictly positive  \cite{chung1997spectral}, and thus the solution exists and is unique. 
It is a standard fact in numerical optimization that the solution to such a linear system is the optima of some convex quadratic optimization problem.
\begin{proposition}\label{lem:PPNPvr}
	Let $Y^*$ be the optima of the following quadratic optimization problem:
	\begin{equation}
		\min_{Y\in\real^{n\times h}}  (1-\beta) \tr \left(Y^TLY\right) + \beta \| \tilde{D}^{1/2}Y - H\|_F^2. \label{eqn:PPNPvr}
	\end{equation}
	Then $Z^* = \tilde{D}^{1/2} Y^*$ is the unique solution to \eqref{eqn:pagerankLS}.
\end{proposition}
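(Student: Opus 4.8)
The plan is to use convexity: the objective in \eqref{eqn:PPNPvr} is a convex quadratic in $Y$, so a point is a global minimizer if and only if the gradient vanishes there, and it suffices to show that this first-order condition, after the substitution $Z=\tilde D^{1/2}Y$, is exactly the linear system \eqref{eqn:pagerankLS}. First I would compute the gradient. Using $\nabla_Y \tr(Y^TLY)=2LY$ (since $L$ is symmetric) and $\nabla_Y \norm{\tilde D^{1/2}Y-H}_F^2 = 2\tilde D Y - 2\tilde D^{1/2}H$ (expand the Frobenius norm as a trace; alternatively, since the objective separates over the columns of $Y$ and $H$, argue column by column), the stationarity condition reads
\begin{equation*}
  (1-\beta)LY + \beta\tilde D Y = \beta\tilde D^{1/2}H.
\end{equation*}

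Second, I would rewrite $L$ using the self-loop–augmented matrices. Since $\tilde D-\tilde A=(D+I)-(A+I)=D-A=L$, the condition becomes $(1-\beta)(\tilde D-\tilde A)Y+\beta\tilde D Y=\beta\tilde D^{1/2}H$, i.e. $\tilde D Y-(1-\beta)\tilde A Y=\beta\tilde D^{1/2}H$. Multiplying on the left by $\tilde D^{-1/2}$ (well defined, as every diagonal entry of $\tilde D$ is at least $1$) gives $\tilde D^{1/2}Y-(1-\beta)\tilde D^{-1/2}\tilde A Y=\beta H$. Third, substitute $Z=\tilde D^{1/2}Y$, equivalently $Y=\tilde D^{-1/2}Z$: then $\tilde D^{1/2}Y=Z$ and $\tilde D^{-1/2}\tilde A Y=\tilde D^{-1/2}\tilde A\tilde D^{-1/2}Z$, so the condition turns into $\bigl(I-(1-\beta)\tilde D^{-1/2}\tilde A\tilde D^{-1/2}\bigr)Z=\beta H$, which is precisely \eqref{eqn:pagerankLS}.

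For existence and uniqueness of $Y^*$, I would note that the Hessian of the objective is $2(1-\beta)L+2\beta\tilde D$, which is positive definite for $\beta\in(0,1]$ because $L\succeq 0$ and $\tilde D\succeq I$; hence the objective is strictly convex and coercive, so it has a unique minimizer $Y^*$, and correspondingly the linear system is non-singular (consistent with the remark following \eqref{eqn:pagerankLS}, and with $\mathcal{N}$ having eigenvalues in $[0,2]$). Since $\tilde D^{1/2}$ is invertible, $Z^*=\tilde D^{1/2}Y^*$ is then the unique solution of \eqref{eqn:pagerankLS}.

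There is no deep obstacle here; the statement is essentially a standard "normal equations" computation. The only points requiring care are the matrix calculus for the Frobenius term and, more importantly, using the identity $\tilde D-\tilde A=L$ so that the augmented normalized operator in \eqref{eqn:pagerankLS} lines up with the \emph{unaugmented} Laplacian appearing in the smoothness penalty; the change of variables $Z=\tilde D^{1/2}Y$ is exactly what converts the combinatorial Laplacian condition into one phrased through $\tilde D^{-1/2}\tilde A\tilde D^{-1/2}$, and its invertibility guarantees the correspondence loses no information.
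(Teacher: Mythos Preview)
Your proof is correct and follows essentially the same approach as the paper. The paper does not actually give an explicit proof for this proposition, but the identical computation (set the gradient to zero, use $L=\tilde D-\tilde A$, then reparameterize via $Z=\tilde D^{1/2}Y$) appears verbatim in the proof of the subsequent Theorem for the reduced problem with $V$; your argument is exactly that proof specialized to $V=I$, with the added (and welcome) justification of strict convexity for uniqueness.
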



Let $L'$ be the Laplacian of the coarse graph. APPNP on the coarse graph corresponds to an optimization problem of the same form except $L$ is replaced by $L'$. With this perspective, we can quantitatively analyze the effect of replacing $L$ with $L'$ in APPNP.
Of course the quadratic variational representation is not unique, and similar formulations have been used to derive label and feature propagation schemes \cite{zhou2004learning,eliav2018bootstrapped,zhu2021interpreting}. 

In \eqref{eqn:PPNPvr}, the optimization problem is unconstrained. To motivate graph coarsening, we generalize it to the constrained case, where we require $Y\in \mathcal{C}\subseteq \real^{n\times h}$ for some constraint set $\mathcal{C}$, i.e.,
\begin{equation}
	\min_{Y\in\mathcal{C}}  (1-\beta) \tr \left(Y^TLY\right) + \beta \| \tilde{D}^{1/2}Y - H\|_F^2. \label{eqn:PPNPvrc}
\end{equation}
We will show that applying graph coarsening is roughly equivalent to putting a special constraint $\mathcal{C}$ on APPNP. Therefore, coarsening can also be considered as a type of regularization and may improve the generalization, which is verified by our empirical results.

\paragraph{Possible Choices of $\mathcal{C}$} The canonical example of $\mathcal{C}$ is a set of matrices whose columns are within some $k$-dimensional subspace with $k< n$. More precisely, let $V\in \real^{n\times k}$ be an orthonormal basis of the $k$-dimensional subspace, then $C = \{Y: Y=VR, \textrm{for some } R\in \real^{k\times h} \}$. Different choices of $\mathcal{C}$'s give rise to different variants of APPNP, e.g., one could encode sparsity, rank, and general norm constraints in $\mathcal{C}$, which may be highly useful depending on the tasks and datasets at hand. For the graph coarsening purpose, we will only focus on the case where $\mathcal{C}$ is a subspace. Nevertheless, being subspaces has already included many interesting special cases. For instance, when $\mathcal{C}$ is the eigenspace of the normalized Laplacian $\mathcal{N}$ corresponding to small eigenvalues, then it acts as a low-pass filter; on the other hand, when $\mathcal{C}$ consists of eigenvectors with high eigenvalues, then it is a high-pass filter.

\subsection{Subspace Constraints and Dimensionality Reduction}
In this subsection, we show that subspace constraints will benefit computation, as we essentially only need to solve a lower-dimensional problem. 
From now on, $\mathcal{C}$ is always a linear subspace of dimension $k$, and let $V\in \real^{n\times k}$ be an orthonormal basis of $\mathcal{C}$. As a result, \eqref{eqn:PPNPvrc} can be rewritten as
\begin{equation}
	\min_{Y: Y=VR \textrm{ for some } R\in \real^{k\times h} }  (1-\beta) \tr \left(Y^TLY\right) + \beta \| \tilde{D}^{1/2}Y - H\|_F^2. \label{eqn:PPNPsubspace}
\end{equation}
Thus, we only need to solve a lower-dimensional problem
\begin{equation}
	R^* = \arg\min_{R\in\real^{k\times h}}  (1-\beta) \tr \left(R^T V^TL VR\right) + \beta \| \tilde{D}^{1/2}VR - H\|_F^2. \label{eqn:reducedPPNP}
\end{equation}
The optima of \eqref{eqn:PPNPsubspace} can be recover via $Y^* = VR^*$.
Let $L_V = V^TLV$, which is an $k\times k$ matrix and thus much smaller than $L$, similarly let $A_V=V^TAV$, $\tilde{A}_V = V^T\tilde{A}V = A_V+I$, $D_V=V^TDV$ and $\tilde{D}_V =V^T \tilde{D} V =D_V+I$.

\begin{theorem}
	Let $R^*$ be the optima of the quadratic optimization problem \eqref{eqn:reducedPPNP}.
	Then $Z^* = \tilde{D}_V^{1/2} R^*$ is the unique solution to the linear system
	\begin{equation*}
		\left(I-(1-\beta)\tilde{D}_V^{-1/2} \tilde{A}_V \tilde{D}_V^{-1/2}\right)Z = \beta \tilde{D}_V^{-1/2}V^T\tilde{D}^{1/2}F.
	\end{equation*}
\end{theorem}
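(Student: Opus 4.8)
The plan is to mimic the proof of Proposition~\ref{lem:PPNPvr}, but now working directly in the reduced $k$-dimensional coordinates rather than in $\real^n$. First I would note that the objective in \eqref{eqn:reducedPPNP}, regarded as a function of $R\in\real^{k\times h}$, is a strictly convex quadratic: its Hessian with respect to $\mathrm{vec}(R)$ is $2(1-\beta)(I_h\otimes L_V) + 2\beta(I_h\otimes \tilde{D}_V)$, and since $\beta\in(0,1]$, $L_V = V^TLV\succeq 0$, and $\tilde{D}_V = V^T(D+I)V\succ 0$ (because $D+I\succeq I$ and $V$ has full column rank), this Hessian is positive definite. Hence $R^*$ exists, is unique, and is characterized by the first-order optimality condition $\nabla g(R^*) = 0$.

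Next I would compute that gradient. Differentiating $\tr(R^TL_VR)$ gives $2L_VR$, and writing $\|\tilde{D}^{1/2}VR-H\|_F^2 = \tr\bigl((\tilde{D}^{1/2}VR-H)^T(\tilde{D}^{1/2}VR-H)\bigr)$ and using $V^T\tilde{D}V = \tilde{D}_V$ gives $2(\tilde{D}_VR - V^T\tilde{D}^{1/2}H)$. Setting the total gradient to zero and dividing by $2$ yields
\[
(1-\beta)L_VR^* + \beta\tilde{D}_VR^* = \beta V^T\tilde{D}^{1/2}H.
\]
I would then invoke the identity $L_V = \tilde{D}_V - \tilde{A}_V$, which is immediate from $L = D-A$, $\tilde{A} = A+I$, $\tilde{D} = D+I$ (the two added identities cancel under the congruence $V^T(\cdot)V$); this rewrites the left-hand side as $\bigl(\tilde{D}_V - (1-\beta)\tilde{A}_V\bigr)R^*$.

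Finally I would left-multiply by $\tilde{D}_V^{-1/2}$ (well-defined since $\tilde{D}_V\succ 0$) and substitute $R^* = \tilde{D}_V^{-1/2}Z^*$, so the two half-powers multiplying $Z^*$ collapse to the identity; this produces exactly
\[
\bigl(I - (1-\beta)\tilde{D}_V^{-1/2}\tilde{A}_V\tilde{D}_V^{-1/2}\bigr)Z^* = \beta\tilde{D}_V^{-1/2}V^T\tilde{D}^{1/2}H,
\]
the claimed linear system (reading $H$ for the $F$ in the statement). Uniqueness of its solution then follows because the coefficient matrix equals $\tilde{D}_V^{-1/2}\bigl((1-\beta)L_V + \beta\tilde{D}_V\bigr)\tilde{D}_V^{-1/2}$, which is positive definite and hence invertible; equivalently it follows from uniqueness of $R^*$ together with invertibility of $\tilde{D}_V^{1/2}$. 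The computation is essentially routine; the one point deserving a moment's care is the positive definiteness of $\tilde{D}_V$ (equivalently, that orthonormality of the columns of $V$ suffices to keep the projected degree matrix nonsingular), since, unlike in the ambient case, we cannot appeal to spectral properties of $\mathcal{N}$ here: the subspace $\mathcal{C}$ is arbitrary.
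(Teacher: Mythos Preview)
Your proof is correct and follows essentially the same route as the paper: set the gradient of \eqref{eqn:reducedPPNP} to zero, use $L_V=\tilde{D}_V-\tilde{A}_V$ to rewrite the resulting normal equations as $(\tilde{D}_V-(1-\beta)\tilde{A}_V)R^*=\beta V^T\tilde{D}^{1/2}H$, then premultiply by $\tilde{D}_V^{-1/2}$ and substitute $Z^*=\tilde{D}_V^{1/2}R^*$. You are a bit more careful than the paper in explicitly justifying strict convexity and the positive definiteness of $\tilde{D}_V$ (which for general orthonormal $V$ need not be diagonal), and in noting that the statement's $F$ should be read as $H$; these are welcome clarifications but do not change the argument.
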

\begin{proof}
	By taking the gradient of \eqref{eqn:reducedPPNP} and set it to $0$, we have
	\begin{align*}
		(1-\beta) (D_V-A_V)R^* +\beta (D_V+I) R^*-\beta V^T\tilde{D}^{1/2}F =0.
	\end{align*}
	By rearranging the terms, it implies
	$$\left( D_V+I - (1-\beta)(A_V+I) \right)R^* = \beta V^T\tilde{D}^{1/2}F$$
	$$\implies \tilde{D}_V^{1/2}(\tilde{D}_V^{1/2}-(1-\beta) \tilde{D}_V^{-1/2}\tilde{A}_V )R^*=\beta V^T\tilde{D}^{1/2}F.$$
	 Multiply both sides by $\tilde{D}_V^{-1/2}$ and reparameterize $Z^*=\tilde{D}_V^{1/2} R^*$,
	$$	\left(I-(1-\beta)\tilde{D}_V^{-1/2} \tilde{A}_V \tilde{D}_V^{-1/2}\right)Z^* = \beta \tilde{D}_V^{-1/2}V^T\tilde{D}^{1/2}F,$$
	which proves the lemma.
\end{proof}
One should see the resemblance between the above equation and \eqref{eqn:pagerankLS}, and thus we may approximately solve $Z^*$ using the same propagation rule. 
\begin{corollary}\label{lem:PPNPreduce}
	Consider the propagation rule:
	\begin{itemize}
		\item $Z^{(1)} =H'\triangleq \tilde{D}_V^{-1/2}V^T\tilde{D}^{1/2}H$ , 
		\item $Z^{(k+1)} = (1-\beta) \tilde{D}_V^{-1/2}\tilde{A}_V \tilde{D}_V^{-1/2} Z^{(k)} + \beta H'$.
	\end{itemize}
	Then, $Z^{t}$ converges to $Z^*$.
\end{corollary}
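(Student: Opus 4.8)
The plan is to read the stated propagation rule as a stationary linear (affine) iteration and apply the classical convergence criterion for such iterations. Writing $M \triangleq (1-\beta)\,\tilde{D}_V^{-1/2}\tilde{A}_V\tilde{D}_V^{-1/2}$, the recurrence becomes $Z^{(k+1)} = M Z^{(k)} + \beta H'$, and by the theorem just proved $Z^*$ is the unique solution of $(I-M)Z=\beta H'$, i.e.\ a fixed point of $Z\mapsto MZ+\beta H'$. Subtracting, $Z^{(k)}-Z^* = M\,(Z^{(k-1)}-Z^*) = M^{\,k-1}(Z^{(1)}-Z^*)$, so everything reduces to showing $\rho(M)<1$: then $\norm{M^{\,k-1}}_2\to0$ and $\norm{Z^{(k)}-Z^*}_F\le\norm{M^{\,k-1}}_2\,\norm{Z^{(1)}-Z^*}_F\to0$. (The iterates are matrices; the contraction bound is applied in the Frobenius norm via $\norm{AB}_F\le\norm{A}_2\norm{B}_F$, or equivalently column by column.)

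The substantive step is bounding the spectrum of $M$. First, $\tilde{D}=D+I\succeq I\succ0$ and $V$ has orthonormal columns, so $\tilde{D}_V=V^T\tilde{D}V\succeq V^TV=I\succ0$; hence $\tilde{D}_V$ is symmetric positive definite, $\tilde{D}_V^{-1/2}$ is a well-defined symmetric matrix, and $M$ is symmetric, so $\rho(M)=\norm{M}_2$ and it suffices to bound the eigenvalues of $\tilde{D}_V^{-1/2}\tilde{A}_V\tilde{D}_V^{-1/2}$. For the upper bound I would use $\tilde{D}_V-\tilde{A}_V=V^T(\tilde{D}-\tilde{A})V=V^TLV\succeq0$ (since $L\succeq0$); conjugating by $\tilde{D}_V^{-1/2}$ gives $I-\tilde{D}_V^{-1/2}\tilde{A}_V\tilde{D}_V^{-1/2}\succeq0$, so all eigenvalues are $\le1$. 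For the lower bound I would invoke the signless Laplacian: $x^T(D+A)x=\sum_{(v_i,v_j)\in E}(x_i+x_j)^2\ge0$, hence $\tilde{D}+\tilde{A}=(D+A)+2I\succeq2I$ and $\tilde{D}_V+\tilde{A}_V=V^T(\tilde{D}+\tilde{A})V\succeq2I\succ0$; conjugating again gives $I+\tilde{D}_V^{-1/2}\tilde{A}_V\tilde{D}_V^{-1/2}\succ0$, so all eigenvalues are $>-1$. Thus the eigenvalues of $\tilde{D}_V^{-1/2}\tilde{A}_V\tilde{D}_V^{-1/2}$ lie in $(-1,1]$, those of $M$ lie in $(-(1-\beta),\,1-\beta]$, and since $\beta\in(0,1]$ we conclude $\rho(M)\le1-\beta<1$ (and trivially $M=0$ when $\beta=1$).

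Combining the two parts finishes the argument: $\rho(M)<1$ also re-confirms that $I-M$ is invertible, so the fixed point $Z^*$ from the theorem is indeed unique, and the contraction $\norm{Z^{(k)}-Z^*}_F\le\norm{M}_2^{\,k-1}\norm{Z^{(1)}-Z^*}_F$ yields $Z^{(k)}\to Z^*$. I expect the only delicate point to be the spectral estimate on $M$: unlike in uncoarsened APPNP, $\tilde{D}_V$ is not diagonal, so both the symmetry of $M$ and the meaning of $\tilde{D}_V^{-1/2}$ hinge on $\tilde{D}_V\succ0$, and the lower eigenvalue bound is precisely where the self-loops (the $+I$ in $\tilde{A}$ and $\tilde{D}$) enter, through positive semidefiniteness of the signless Laplacian. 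The remainder is the routine convergence analysis of a stationary linear iteration.
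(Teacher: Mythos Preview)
Your argument is correct. The paper does not give a proof of this corollary at all: it is stated immediately after the theorem and the text simply remarks that the reduced linear system has the same form as \eqref{eqn:pagerankLS}, so the same propagation rule applies, implicitly deferring the convergence analysis to the original APPNP result cited from \cite{klicpera2018predict,zhou2004learning}. What you have written supplies exactly the details the paper omits, and in fact goes further: you correctly note that $\tilde{D}_V=V^T\tilde{D}V$ need not be diagonal for a general orthonormal $V$, so the spectral bound cannot be read off from the usual normalized-Laplacian facts but has to be rederived via the PSD inequalities $V^TLV\succeq0$ and $V^T(D+A+2I)V\succ0$. This is a genuine (if small) strengthening over the paper's by-analogy justification.
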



This is almost the same as APPNP, but now the dimension, i.e., the size of the symmetric propagation matrix $\tilde{D}_V^{-1/2}\tilde{A}_V \tilde{D}_V^{-1/2}$ is $k$ by $k$, which is smaller than that in the original APPNP.

Unfortunately, now the time to compute the propagation matrix, $\tilde{D}_V^{-1/2}\tilde{A}_V \tilde{D}_V^{-1/2}$, is $O(nk^2)$ which is expensive for moderately large $k$. Note the original propagation matrix $\tilde{D}^{-1/2}\tilde{A} \tilde{D}^{-1/2}$ can be computed in time $O(m)$, and for sparse graph this is only $O(n)$. Moreover, $\tilde{D}_V^{-1/2}\tilde{A}_V \tilde{D}_V^{-1/2}$ is a dense matrix, which requires $O(k^2)$ space to store and in each propagation, the time complexity is $O(k^2 h)$, where $h$ is the size of feature vectors in $Z^{(t)}$. In comparison, for sparse graphs, $\tilde{D}^{-1/2}\tilde{A} \tilde{D}^{-1/2}$ only requires $O(m)$ space and each propagation takes $O(mh)$ time. Therefore, unless the reduction ratio is extremely high, say reduce from $10^6$ to $10^3$, the computational costs and space usage could increase significantly, which defeats the purpose of graph coarsening in the first place.

\subsection{Sparse Projections and Graph Coarsening}
To overcome the above issues, we restrict the orthonormal matrix $V$ to be sparse. In this paper, we will only consider the family of normalized partition matrices of size $n\times k$ (see Section~\ref{sec:partitionmatrix} for the definitions), denoted as $\mathcal{P}$. 
Given a target constraint subspace $\mathcal{C}$ and its orthonormal basis $V$, we will first find a matrix  ${P}\in \mathcal{P}$ that is close to $V$ and then replace $V$ with ${P}$ in \eqref{eqn:reducedPPNP}. Since $P$ is also an orthonormal matrix, we can apply Corollary \ref{lem:PPNPreduce} directly. 
Therefore, for this surrogate quadratic objective, the propagation rule become 

1) $Z^{(1)} =H:=  \tilde{D}_{{P}}^{-1/2}P^T\tilde{D}^{1/2}F$ ,

2) $Z^{(k+1)} = (1-\beta) \tilde{D}_{{P}}^{-1/2}\tilde{A}_{{P}} \tilde{D}_{{P}}^{-1/2} Z^{(k)} + \beta H$.

The above propagation converges to some $R$ close to $R^*$ \eqref{eqn:reducedPPNP}, as long as $P\approx V$. How to find such a $P$ will be discussed in the following subsections. Recall that $\tilde{A}_{{P}} = A_{{P}}+I ={P}^TA{P} +I$; and $\tilde{D}_{P}=P^TDP+I$ is still a diagonal matrix. Note when $P=I$, i.e., there is no coarsening, this recovers APPNP. Moreover, the propagation matrix $\tilde{D}_{{P}}^{-1/2}\tilde{A}_{{P}} \tilde{D}_{{P}}^{-1/2} $ is exactly the graph convolution we defined for coarse graph in Section~\ref{sec:ourmethod}.
Now since $P$ contains one non-zero entry per row, the time complexity to compute $\tilde{D}_{{P}}^{-1/2}\tilde{A}_{{P}} \tilde{D}_{{P}}^{-1/2} $ is $O(m)$, which is $O(n)$ for sparse graph. 
The number of non-zero entries in $\tilde{D}_{{P}}^{-1/2}\tilde{A}_{{P}} \tilde{D}_{{P}}^{-1/2} $ is $m'$, i.e., the number of super-edges in $G'$. Then the space to store it is $O(m')$ and the time complexity to compute each propagation is $O(m'h)$. Thus, the time and space complexity in the forward pass are improved by a factor of $\frac{m}{m'}$ over the original graph, and note $m'$ could be much smaller than $m$ as each super-edge corresponds to many edges in the original graph.
More importantly, the number of nodes is reduced from $n$ to $n'$. So, the space and time complexity in backpropagation are improved by a factor of $\frac{n}{n'}$.

\subsection{Nuclear Norm Error, $k$-Means, and Spectral Clustering}
From the above discussion, the main question left is how to efficiently compute a partition matrix $P$ which is a good approximation to the target orthonormal matrix $V$.
In this subsection, we provide suitable metrics to quantify the approximation error and give efficient and effective approximation algorithms. 

Our goal is to find a matrix $P\in\mathcal{P}$ whose column space is close to the space spanned by $V$. Since both $P$ and $V$ are orthonormal, so if $P$ is close to $V$, then $P^TV$ should be close to identity. Hence, one natural error metric is the distance between $P^TV$  and $I$. Since $P^TV$ is not symmetric in general, it is more convenient to measure the distance between $V^TPP^TV$ and $I$, or $\norm{V^TPP^TV-I}$ for some matrix norm $\norm{\cdot}$. 
We next show that, when the matrix norm is chosen to be the nuclear norm (denoted as $\norm{\cdot}_1$), i.e., the sum of singular values, the problem is equivalent to $k$-means clustering. 

\begin{theorem}\label{lem:nucleartokmeans}
	Let $S=\{v_1,\cdots, v_n\}$ be a set of $n$ points, where $v_i$ is the $i$-th row of $V$. Let $\cost(P)$ be the $k$-means cost of the partition induced by $P$ with respect to $S$. Then we have $\norm{I-V^TPP^TV}_1=\cost(P)$ for all $P\in \mathcal{P}$.  
\end{theorem}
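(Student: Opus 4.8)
The plan is to evaluate both sides of the claimed identity explicitly and show they coincide. First I would identify the rows of $P^TV$: since $P_{ij}=1/\sqrt{c_j}$ precisely when $v_i\in C_j$, the $j$-th row of $P^TV$ is $\frac{1}{\sqrt{c_j}}\sum_{i\in C_j}v_i=\sqrt{c_j}\,\mu_j$, where $\mu_j\triangleq\frac{1}{c_j}\sum_{i\in C_j}v_i$ is the centroid of the $j$-th cluster. Consequently $V^TPP^TV=(P^TV)^T(P^TV)=\sum_{j=1}^k c_j\,\mu_j\mu_j^T$, a symmetric positive semidefinite $k\times k$ matrix.

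Next I would invoke the standard variance decomposition for $k$-means. Expanding $\norm{v_i-\mu_j}^2=\norm{v_i}^2-2v_i^T\mu_j+\norm{\mu_j}^2$ and summing over $i\in C_j$ (using $\sum_{i\in C_j}v_i=c_j\mu_j$) gives $\sum_{i\in C_j}\norm{v_i-\mu_j}^2=\sum_{i\in C_j}\norm{v_i}^2-c_j\norm{\mu_j}^2$. Summing over $j$, and using $\sum_i\norm{v_i}^2=\norm{V}_F^2=\tr(V^TV)=k$ (as $V$ has orthonormal columns) together with $\sum_j c_j\norm{\mu_j}^2=\tr\!\big(\sum_j c_j\mu_j\mu_j^T\big)=\tr(V^TPP^TV)$, I obtain $\cost(P)=k-\tr(V^TPP^TV)=\tr\!\big(I-V^TPP^TV\big)$.

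It then remains to replace the trace by the nuclear norm, i.e.\ to show $I-V^TPP^TV$ is positive semidefinite, so that its nuclear norm (sum of singular values) equals the sum of its eigenvalues, which equals its trace. This is the one place where the orthonormality of \emph{both} $P$ and $V$ is used: by submultiplicativity, $\norm{P^TV}_2\le\norm{P}_2\norm{V}_2=1$, so every singular value of the $k\times k$ matrix $P^TV$ lies in $[0,1]$; hence every eigenvalue of $V^TPP^TV=(P^TV)^T(P^TV)$ lies in $[0,1]$, and therefore $I-V^TPP^TV\succeq 0$. Combining with the previous paragraph yields $\norm{I-V^TPP^TV}_1=\tr(I-V^TPP^TV)=\cost(P)$ for all $P\in\mathcal{P}$.

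The only genuine subtlety is this last semidefiniteness step: without the column-normalization built into the definition of $\mathcal{P}$, the eigenvalues of $V^TPP^TV$ could exceed $1$ and the nuclear-norm identity would break (only $\le$ would survive). Everything else — the centroid computation and the $k$-means expansion — is routine bookkeeping, so I would present the bound $\norm{P^TV}_2\le 1$ (equivalently, that the principal angles between the column spaces of $P$ and $V$ have cosines at most one) as the single nontrivial ingredient.
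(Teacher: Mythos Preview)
Your proof is correct and follows essentially the same two-step structure as the paper: establish that $I-V^TPP^TV$ is positive semidefinite so that the nuclear norm equals the trace, and then show the trace equals the $k$-means cost. The only cosmetic difference is that the paper routes the second step through the identity $\tr(I-V^TPP^TV)=\norm{PP^TV-V}_F^2$ (using $P^TP=I$) before identifying the rows of $PP^TV$ with the cluster centroids, whereas you expand the variance decomposition directly; these are the same computation, and your explicit justification of positive semidefiniteness via $\norm{P^TV}_2\le 1$ is in fact more detailed than the paper's, which simply asserts it.
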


\begin{proof}
	First observe that the matrix $I-V^TPP^TV$ is positive semidefinite, and therefore 
	\begin{equation}\label{eqn:nucleartotrace}
		\norm{I-V^TPP^TV}_1 = \tr\left(I-V^TPP^TV\right).
	\end{equation}
	Moreover, 
	\begin{align*}
		\tr\left(I-V^TPP^TV\right) &= 	\tr\left(V^TV-V^TPP^TV\right)\\
		&=  	\tr\left(V^TV-2V^TPP^TV + V^TPP^TV\right)\\
		&=\tr\left(V^TV-2V^TPP^TV + V^TPP^TPP^TV\right)\\
		&=\tr\left((PP^TV -V)^T (PP^TV -V)\right)\\
		& = \norm{PP^TV -V}_F^2,
	\end{align*}
	where in the last equality, we use the fact that $\norm{A}_F^2 = \tr(A^TA)$ for any $A$.
	Together with \eqref{eqn:nucleartotrace}, we have 
	\begin{equation}\label{eqn:nucleartoFrobenius}
		\norm{I-V^TPP^TV}_1 =\norm{PP^TV -V}_F^2.
	\end{equation}
	The r.h.s. of \eqref{eqn:nucleartoFrobenius} is exactly the $k$-means cost of the partition induced by $P$. To see this, let $C_1,\cdots, C_k$ be the clusters of points in this partition, i.e., $v_i\in C_j$ iff $P_{ij} \neq 0$, then the corresponding $k$-means cost of this partition is 
	\begin{equation}\label{eqn:k-meanscost}
		\cost(P) =  \sum_{j=1}^k \sum_{v\in C_j } \norm{v - g_j}_2^2,
	\end{equation}
	where $g_j$ is the centroid of the $j$-th cluster. Recall the definition of $\hat{P}$ (with $P=\hat{P}C^{-1/2}$), which is the unnormalized partition matrix.  Then $g_j = \frac{1}{|C_j|} \sum_{v\in C_j } v = \frac{1}{c_j}\hat{P}_{:j}^T V.$ Therefore,
	\begin{align*}
		\cost(P) &=  \sum_{j=1}^k \sum_{v\in C_j } \norm{v - \frac{1}{c_j}\hat{P}_{:j}^T V}_2^2 =  \sum_{i=1}^n \norm{v_i - \frac{1}{c_j}\hat{P}_{i:} \hat{P}^T V}_2^2\\
		& = \norm{PP^TV -V}_F^2.
	\end{align*}
	By \eqref{eqn:nucleartoFrobenius}, we have $\cost(P)  = \norm{I-V^TPP^TV}_1$ for all normalized partition matrix $P\in\mathcal{P}$, which proves the Lemma.
\end{proof}

We have the following simple corollary.
\begin{corollary} 
	$P^* = \arg\min_{P\in \mathcal{P}} \norm{I-V^TPP^TV}_1$ if and only if
	the partition induced by $P^*$ has optimal $k$-means cost w.r.t. $S$. 
\end{corollary}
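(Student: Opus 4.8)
The plan is to derive the statement directly from Theorem~\ref{lem:nucleartokmeans}, which already establishes the pointwise identity $\norm{I-V^TPP^TV}_1 = \cost(P)$ for every $P\in\mathcal{P}$. Once this identity is in hand, the corollary is a purely formal consequence: two functions that coincide at every point of their common domain $\mathcal{P}$ have exactly the same set of minimizers over that domain.

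Concretely, I would argue as follows. Fix $P^*\in\mathcal{P}$. By the theorem, $\norm{I-V^TP^*(P^*)^TV}_1 = \cost(P^*)$, and likewise $\norm{I-V^TPP^TV}_1 = \cost(P)$ for every competing $P\in\mathcal{P}$. Hence the statement ``$P^*$ attains $\min_{P\in\mathcal{P}}\norm{I-V^TPP^TV}_1$'' is equivalent to ``$\norm{I-V^TP^*(P^*)^TV}_1 \leq \norm{I-V^TPP^TV}_1$ for all $P\in\mathcal{P}$'', which via the two substitutions becomes ``$\cost(P^*) \leq \cost(P)$ for all $P\in\mathcal{P}$'', i.e., the partition induced by $P^*$ has optimal $k$-means cost with respect to $S$. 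Reading $\arg\min$ as the (possibly non-unique) set of optimal partition matrices, this chain of equivalences is exactly the claimed ``if and only if''.

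Since all the analytic work — relating the nuclear-norm distance $\norm{I-V^TPP^TV}_1$ to the $k$-means objective — is carried out in Theorem~\ref{lem:nucleartokmeans}, there is no substantive obstacle in this corollary. The only point requiring a little care is notational: one should make explicit that $\arg\min$ refers to the set of minimizers (so the equivalence concerns membership in the two argmin sets, not uniqueness of the optimizer), and that both objectives are being compared over the same feasible set $\mathcal{P}$ of normalized partition matrices.
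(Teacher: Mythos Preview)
Your proposal is correct and matches the paper's approach: the paper presents this as an immediate corollary of Theorem~\ref{lem:nucleartokmeans} without a separate proof, and your argument---that two objectives agreeing pointwise on $\mathcal{P}$ share the same minimizers---is exactly the intended one-line justification.
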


\paragraph{Connection to Spectral Clustering} From the above corollary, to obtain a good approximation $P$ in terms of nuclear norm, it is equivalent to solve the $k$-means problem w.r.t. $V$. 
Note that when $V$ consists of the $k$ eigenvectors of the normalized Laplacian $\mathcal{N}$ with lowest eigenvalues,  then applying $k$-means to $V$ is the \emph{spectral clustering} algorithm. Thus, in this paper, we provide an alternative explanation of the role of $k$-means in spectral clustering algorithms.

For sparse graphs, the time to compute the $k$ lowest eigenvectors will be dominated by the complexity of $k$-means computation. In the worst case, the $k$-means problem is known to be NP-hard, and approximation algorithms are used in practice, e.g., Lloyd's algorithm \cite{lloyd1982least}, which takes $O(nkd)$ time per iteration, where $d$ is the dimension of each points. For spectral clustering $d=k$. Therefore, spectral clustering does not scale well to large graphs for our application, since $k$, the number of clusters, will be quite large compared to typical graph clustering scenarios. We next investigate a relaxed error norm, and make a connection to a recent work of Loukas \cite{loukas2019graph} on graph coarsening.

\subsection{Spectral Norm Error}
In the above subsection, we measure the error of $P$ w.r.t. $V$ by $\norm{I-V^TPP^TV}_1$, which is the sum of singular values; next we relax this to the spectral norm $\norm{I-V^TPP^TV}_2$, i.e., the maximum singular value. We have
\begin{align*}
	\norm{I-V^TPP^TV}_2 &= \max_{x:\|x\|_2=1} \left|x^T (V^TV-V^TPP^TV) x \right|\\
	&=  \max_{x:\|x\|_2=1} \left| x^TV^TVx-  x^TV^TPP^TPP^TVx \right|\\
	&=  \max_{x:\|x\|_2=1} \left| \norm{Vx}_2^2 - \norm{PP^TVx}_2^2 \right|\\
\end{align*}
Note that $y=Vx$ has norm $1$ for any unit-norm $x$, and thus $\{y: y=Vx, \forall $x$ \textrm{ s.t. } \|x\|_2=1 \}$ is the set of all unit vector in the subspace spanned by $V$, i.e., $\mathcal{C}$. Thus we have
\begin{align}
	\norm{I-V^TPP^TV}_2 &=\max_{y\in \mathcal{C}, \|y\|=1}  \left| \norm{y}_2^2 - \norm{PP^Ty}_2^2 \right|  \nonumber\\
	&=\max_{y\in \mathcal{C}} \frac{\left| \norm{y}_2^2 - \norm{PP^Ty}_2^2 \right|}{\|y\|_2^2} \nonumber\\
	&= \max_{y\in \mathcal{C}} \frac{ \norm{y - PP^Ty}_2^2 }{\|y\|_2^2}, \label{eqn:spectralerror}
\end{align}
where the last equality is from Pythagorean theorem (since $PP^T$ is an orthogonal projection). This is essentially equivalent to the Grassmannian distance between two subspaces, which is defined as $\|PP^T-VV^T\|_2$. The equivalence proof is nontrivial and can be found in the book \cite{kato2013perturbation} (Theorem 6.34).

The above error measure is independent on the underlying graph. In many graph applications, it is often more suitable to use a generalized Euclidean norm $\norm{\cdot}_L$, i.e., $\|x\|_L = \sqrt{x^TLx}$, where $L$ is the Laplacian of the graph. Using this generalized norm in \eqref{eqn:spectralerror}, we will consider the following graph dependent error metric:
\begin{equation}
	\max_{y\in \mathcal{C}} \frac{\norm{y - PP^Ty}_L^2 }{\|y\|_L^2}. \label{eqn:L-spectralerror}
\end{equation}

It is still difficult to efficiently compute an partition matrix $P$ that minimize the above objective. Fortunately, this objective has been studied in \cite{loukas2019graph} recently (see Definition 11 in \cite{loukas2019graph}), and the author proposed efficient approximation algorithms for the case when $V$ is the first $k$ eigenvectors. Moreover, several effective heuristics are discussed and tested empirically on real world datasets.  

In our experiments, the coarsening algorithms from \cite{loukas2019graph}, which aim to minimize \eqref{eqn:L-spectralerror}, perform better than spectral clustering. We believe this is mainly due to the generalized Euclidean norm used. Next, we provide a theoretical explanation on this.
\begin{theorem}
	Suppose $\max_{y\in \mathcal{C}} \frac{\norm{y - PP^Ty}_L }{\|y\|_L} \le \eps<1$, then we have for any $y\in \mathcal{C}$, there exists $x\in \real^k$ such that
	\begin{equation*}
		\left| y^TLy - x^TP^TLPx \right| \le 3\eps \|y\|_L^2.
	\end{equation*}
\end{theorem}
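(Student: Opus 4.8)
The plan is to use the most natural candidate, $x = P^Ty \in \real^k$. With this choice $Px = PP^Ty$ is the orthogonal projection of $y$ (with respect to the standard inner product) onto the column space of $P$, and $x^TP^TLPx = (Px)^TL(Px) = \norm{PP^Ty}_L^2$. Hence the quantity to be bounded becomes $\bigl|\, \norm{y}_L^2 - \norm{PP^Ty}_L^2 \,\bigr|$, and the goal reduces to showing this is at most $3\eps\norm{y}_L^2$. This is exactly the same reparametrization $x \mapsto P^Ty$ that underlies the reduced propagation rule in Corollary~\ref{lem:PPNPreduce}, so no new construction is needed.

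Next I would reduce everything to the first-order estimate supplied by the hypothesis. Although $L$ is only positive semidefinite, $\norm{\cdot}_L$ still obeys the triangle inequality (it is a seminorm), so from $\norm{y - PP^Ty}_L \le \eps\norm{y}_L$ we obtain both $\bigl|\,\norm{y}_L - \norm{PP^Ty}_L\,\bigr| \le \norm{y - PP^Ty}_L \le \eps\norm{y}_L$ and $\norm{PP^Ty}_L \le \norm{y}_L + \norm{y - PP^Ty}_L \le (1+\eps)\norm{y}_L$.

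Finally I would combine these via the factorization $a^2 - b^2 = (a-b)(a+b)$ with $a = \norm{y}_L$ and $b = \norm{PP^Ty}_L$:
\[
\bigl|\, \norm{y}_L^2 - \norm{PP^Ty}_L^2 \,\bigr| = \bigl|\,\norm{y}_L - \norm{PP^Ty}_L\,\bigr|\cdot\bigl(\norm{y}_L + \norm{PP^Ty}_L\bigr) \le \eps\norm{y}_L\cdot(2+\eps)\norm{y}_L \le 3\eps\norm{y}_L^2,
\]
where the last inequality uses $\eps<1$; the degenerate case $\norm{y}_L = 0$ is covered by taking $x = 0$. I do not anticipate a genuine obstacle: the only delicate point is that $\norm{\cdot}_L$ is a seminorm rather than a norm (the kernel of $L$ is nontrivial), so one must avoid dividing by $\norm{y}_L$ — but the argument only invokes the triangle inequality, which holds for any positive semidefinite $L$. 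A slightly sharper constant $\eps(2+\eps)$ is available, but $3\eps$ suffices and matches the statement.
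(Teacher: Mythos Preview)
Your proof is correct and follows essentially the same approach as the paper: both choose $x = P^Ty$, apply the triangle inequality for the seminorm $\norm{\cdot}_L$ to obtain $\bigl|\norm{y}_L - \norm{PP^Ty}_L\bigr| \le \eps\norm{y}_L$, and then pass to the squared quantities (the paper squares the two-sided bound $(1-\eps)\norm{y}_L \le \norm{PP^Ty}_L \le (1+\eps)\norm{y}_L$, whereas you use the factorization $a^2-b^2=(a-b)(a+b)$, which is algebraically equivalent). Your explicit handling of the degenerate case $\norm{y}_L = 0$ via $x=0$ is a small improvement over the paper, which leaves that case implicit.
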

\begin{proof}
	Given $y$, we simply set $x=P^Ty$. Then,
	\begin{align*}
		\left| \sqrt{y^TLy} - \sqrt{x^TP^TLPx} \right| &= \left| \sqrt{y^TLy} - \sqrt{y^TPP^TLPP^Ty} \right|\\
		&= \left| \|L^{1/2} y\|_2 - \|L^{1/2}PP^Ty\|_2 \right|\\
		&\le \|L^{1/2} (y-PP^T y)\|_2 \quad\textrm{Triangle inequality}\\
		& = \|y-PP^Ty\|_L\\
		& \le \eps \sqrt{y^TLy} \quad \textrm{By assumption}
	\end{align*}
	Equivalently, $(1-\eps)\|y\|_L\le \sqrt{x^TP^TLPx}  \le (1+\eps) \|y\|_L$, which implies
	$	(1-\eps)^2\|y\|_L^2\le {x^TP^TLPx}  \le (1+\eps)^2 \|y\|_L^2$.
	Since $(1-\eps)^2 =1-2\eps+\eps^2 \ge 1-2\eps$ and $(1+\eps)^2 = 1+2\eps+\eps^2\le 1+3\eps$, the theorem follows from the above inequalities.
\end{proof}
Similarly, if we have $\max_{y\in \mathrm{span}(P)} \frac{\norm{y - VV^Ty}_L }{\|y\|_L} \le \eps<1$, we can also prove that, for any $x\in\real^k$, there exists $y\in \mathcal{C}$ such that
\begin{equation*}
	\left| y^TLy - x^TP^TLPx \right| \le 3\eps \|Px\|_L^2.
\end{equation*} 
For graph coarsening, we essentially replace the graph regularization term $E(y) = y^TLy, y\in\mathcal{C}$ in \eqref{eqn:PPNPvrc} by $E'(x) = x^TP^T LP^Lx, x\in \real^k$.
So if the two conditions holds simultaneously, this replacement does not change the optimization problem by much, and the resulting embedding should be similar, which is qualitatively verified in the experiments. 
\section{Related Work}
To overcome the scalability issue of training GNNs. Layer-wise sampling combined with mini-batch training has been extensively studied \cite{hamilton2017inductive, chen2018fastgcn,chen2018stochastic,zou2019layer,cong2020minimal,ramezani2020gcn}. Subgraph sampling for scaling up GNNs, which sample a small subgraph in each training iteration and perform full-batch training on this subgraph, is also explored recently \cite{zeng2019graphsaint,chiang2019cluster}. The authors in \cite{ramezani2020gcn} study the problem of how to reduce the sampling frequency in aforementioned sub-sampling approaches. Edge sampling is also used as effective tool for tackling oversmoothing \cite{rong2019dropedge}.
Another approach focuses on how to simplify the models without sacrificing, in particular, to decouple the graph diffusion process from the feature transformation. In this way, the diffusion process can be pre-computed and stored, after which the classifier can be trained with naive stochastic optimization\cite{bojchevski2020scaling,chen2020scalable,wu2019simplifying}. \cite{rossi2020sign} propose a method to pre-compute and store graph convolutional filters of different size.
Graph reduction techniques have been used to speed up combinatorial problems \cite{moitra2009approximation, englert2014vertex}. Graph reduction with spectral approximation guarantees are studied in \cite{loukas2019graph,jin2020graph,li2018spectral}. Recently, graph coarsening has been applied to speedup graph embedding algorithms \cite{fahrbach2020faster,deng2019graphzoom,liang2018mile}. As far as we are aware, this is the first work applying graph coarsening to speedup the training of GNNs in the semi-supervised setting.

\begin{table*}[!htbp]
	\setlength{\abovecaptionskip}{0.2cm}
	\caption{Summary of results in terms of mean classification accuracy and standard deviation (in percent) over 20 runs on different datasets. The coarsening ratios of GCN and APPNP are $c=[0.7, 0.5, 0.3, 0.1]$ for each dataset respectively. The highest accuracy for each model in each column is highlighted in bold.}\label{tab:result}
	\centering
	\begin{tabular}{lcccccccccc}\toprule
		\multirow{2}*{\textbf{Method}}& \multicolumn{2}{c}{\textbf{Cora}} & \multicolumn{2}{c}{\textbf{Citeseer}}& \multicolumn{2}{c}{\textbf{Pubmed}}& \multicolumn{2}{c}{\textbf{Coauthor Physics}} &
		\multicolumn{2}{c}{\textbf{DBLP}} \\ \cmidrule(r){2-11}
		&5 &Fixed&5 &Fixed&5 &Fixed&5 &20&5 &20\\ \midrule
		
		GCN & 67.5$\pm$4.8 & 81.5$\pm$0.6 &57.3$\pm$3.7  & 71.1$\pm$0.7 & 67.4$\pm$5.6 &\textbf{79.0}$\pm$0.6 &91.2$\pm$2.1  & 93.7$\pm$0.6 & 61.5$\pm$4.8&72.6$\pm$2.3 \\ 
		GCN (c=0.7) & 67.9$\pm$4.3 & 82.3$\pm$0.6 & 57.5$\pm$5.9&71.8$\pm$0.4  &68.3$\pm$5.2 &78.9$\pm$0.4 & 91.0$\pm$1.9&\textbf{93.8}$\pm$0.6 &61.4$\pm$5.0 &72.1$\pm$2.1 \\
		GCN (c=0.5) &  68.8$\pm$4.6 & \textbf{82.7}$\pm$0.5 & 57.7$\pm$5.3 & \textbf{72.0}$\pm$0.5 &\textbf{68.9}$\pm$4.4 & 78.5$\pm$0.3& \textbf{91.5}$\pm$2.0&93.7$\pm$0.7 &61.8$\pm$4.8 &72.7$\pm$2.0 \\
		GCN (c=0.3) &  \textbf{69.4}$\pm$4.5 & 81.7$\pm$0.5 &58.1$\pm$5.2 & 71.4$\pm$0.3 &68.7$\pm$4.2 & 78.4$\pm$0.4&90.8$\pm$2.3 &93.4$\pm$0.6&64.8$\pm$5.2  &74.5$\pm$1.9\\
		GCN (c=0.1) &  67.6$\pm$5.1 & 77.8$\pm$0.7 &\textbf{58.3}$\pm$6.3 & 71.1$\pm$0.4 &68.5$\pm$5.2 & 78.3$\pm$0.5&87.8$\pm$3.6 &91.5$\pm$1.4&\textbf{67.9}$\pm$5.6 &\textbf{76.0}$\pm$2.1 \\
		\midrule 
		APPNP & 72.8$\pm$3.8  &  83.3$\pm$0.5& 59.4$\pm$4.5 &  71.8$\pm$0.5 & 70.4$\pm$4.9& 80.1$\pm$0.2&92.0$\pm$1.6 &\textbf{94.0}$\pm$0.6 &\textbf{72.9}$\pm$4.2 &79.0$\pm$1.1 \\
		APPNP (c=0.7) &\textbf{73.9}$\pm$4.6  & \textbf{83.9}$\pm$0.8 &59.7$\pm$4.3 & 71.8$\pm$0.6 & 70.7$\pm$5.5 &\textbf{80.4}$\pm$0.3 &\textbf{92.3}$\pm$1.6 &93.7$\pm$0.8& 72.0$\pm$4.5&78.7$\pm$1.3\\
		APPNP (c=0.5) & 73.4$\pm$4.3 &  83.7$\pm$0.7&60.4$\pm$4.8 & \textbf{72.0}$\pm$0.5 &\textbf{71.2}$\pm$5.0 &79.6$\pm$0.3 &91.8$\pm$1.9   &93.9$\pm$0.5 & 72.3$\pm$4.0&79.1$\pm$1.2 \\
		APPNP (c=0.3) &73.1$\pm$3.5  &  82.5$\pm$0.6&\textbf{60.9}$\pm$5.7 & 71.6$\pm$0.4 &70.6$\pm$5.3 &78.4$\pm$0.7 & 91.7$\pm$1.5&93.6$\pm$0.6 &72.7$\pm$4.2 &\textbf{79.7}$\pm$1.0 \\
		APPNP (c=0.1) &70.8$\pm$4.9  &  80.2$\pm$0.8&60.7$\pm$5.8 & 71.8$\pm$0.5 &70.4$\pm$4.9 &77.3$\pm$0.5 & 88.6$\pm$3.3 &91.0$\pm$1.2 & 72.1$\pm$5.8 & 79.0$\pm$1.7 \\
		\bottomrule
	\end{tabular}
\end{table*}

\begin{figure*}[tbp]
	\setlength{\abovecaptionskip}{0cm}
	\setlength{\belowcaptionskip}{-0.2cm}
	\centering
	\includegraphics[width=0.96\textwidth]{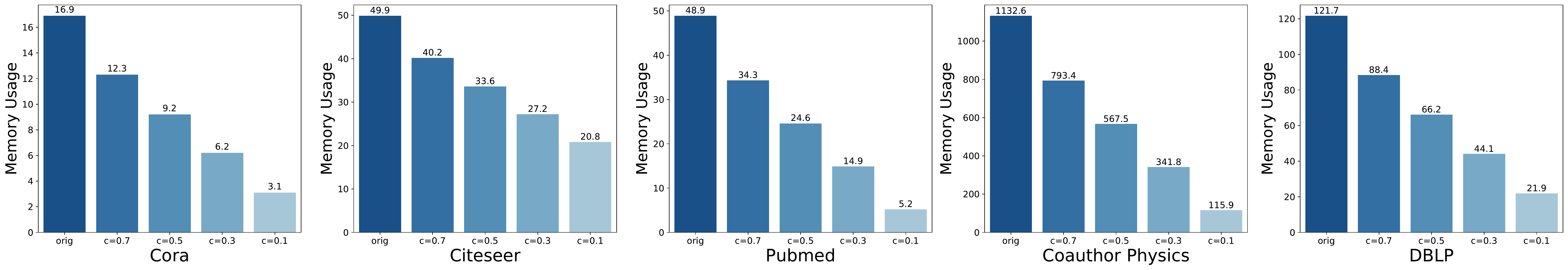}
	\caption{The Memory Usage of APPNP and coarse APPNP.}
	\label{fig:memory usage}
\end{figure*}

\section{Experiments} \label{sec:experiments}

In this section, we evaluate the performance of our method on two representative GNN architectures, namely GCN and APPNP: GCN has a structure with interlacing layers of graph diffusion and feature transformation, and APPNP decouples feature transformation from the diffusion. We compare the effect of different coarsening ratios on GCN and APPNP, including the  full-graph training. We also test the effect of several representative graph coarsening methods.

\subsection{Experimental Setup}

\textbf{Data splits.} The results are evaluated on five real world networks Cora, Citeseer, Pubmed, Coauthor Physics and DBLP \citep{kipf2016semi,bojchevski2018deep,shchur2018pitfalls} for semi-supervised node classification. Refer to the appendix for more details of the five datasets. For Cora, Citeseer, and Pubmed, we use the public split from \cite{yang2016revisiting}, which is widely used in the literature. In particular, the training set contains 20 labeled nodes per class, with an additional validation set of 500 and accuracy is evaluated on a test set of 1,000 nodes. 
For the other two datasets, the performance is tested on random splits \citep{shchur2018pitfalls}, where 20 labeled nodes per class are selected for training, 30 per class for validation, and all the other nodes are used for testing. Moreover, we also test the performances on each dataset under few label rates. We also evaluate in the few-shot regime, where, for each dataset, the training and validation set both have 5 labeled nodes per class, and the test set consists of all the rest. All the results are averaged over 20 runs and standard deviations are reported. 

\noindent\textbf{Implementation details.} For the original GCN and APPNP, we follow the settings suggested in the previous papers~\citep{shchur2018pitfalls,matthias2019pyg} for hyperparameters. In addition, we tuned the hyperparameter of models for better performance on Coauthor Physics and DBLP. For the fairness of comparison, our models use the same network architectures as baselines.  For evaluating the effect of different coarsening ratios, we report the results of variation neighborhoods coarsening; see \citep{loukas2019graph} for the detail. During the coarsening process, we remove super-nodes with mixed labels from the training set and the validation set, and also remove unlabeled isolated nodes. The detailed hyperparameter settings are listed in appendix.

\begin{table*}[!htbp]
	\setlength{\abovecaptionskip}{0.2cm}
	\caption{Summary of results in terms of accuracy, standard deviation and coarsening time(secs) with different coarsening methods.}\label{tab:coarsening methods}
	\centering
	\begin{tabular}{llccccccccc}\toprule
		\multirow{2}*{\textbf{Dataset}}&\multirow{2}*{\textbf{Coarsening Method}}& \multicolumn{3}{c}{\textbf{c=0.7}} & \multicolumn{3}{c}{\textbf{c=0.5}}& \multicolumn{3}{c}{\textbf{c=0.3}}\\ 
		\cmidrule(r){3-11}
		& &GCN &APPNP &Time&GCN &APPNP &Time&GCN &APPNP &Time \\ \midrule 
		\multirow{5}*{Cora}&Spectral Clustering & 82.2$\pm$0.5 & 83.2$\pm$0.4 &23.4 & 81.5$\pm$0.7 &  82.5$\pm$0.5& 16.3&79.4$\pm$0.5 &  78.0$\pm$1.3&10.0 \\
		&Variation Neighborhoods & \textbf{82.3}$\pm$0.6 & \textbf{83.9}$\pm$0.8 &2.0 & \textbf{82.7}$\pm$0.5 & 83.7$\pm$0.7 & 1.3& 81.7$\pm$0.5 &  \textbf{82.5}$\pm$0.6&2.1 \\
		&Variation Edges &\textbf{82.3}$\pm$0.5 &83.6$\pm$0.6 &0.3 &82.2$\pm$0.5 &\textbf{83.9}$\pm$0.5 &0.5 &80.0$\pm$0.4 & 81.1$\pm$0.7&0.6 \\
		&Algebraic JC &81.9$\pm$0.7  &82.9$\pm$0.7  &0.3 & 81.6$\pm$0.6 & 83.5$\pm$0.6 &0.5 & \textbf{82.2}$\pm$0.5 &\textbf{82.5}$\pm$0.7  &0.7 \\
		&Affinity GS & 81.4$\pm$0.4&83.3$\pm$0.4 &2.3 &82.0$\pm$0.7 &83.7$\pm$0.6 &3.2 &81.2$\pm$0.6 &81.9$\pm$1.1 &3.7 \\
		\midrule
		\multirow{5}*{DBLP}&Spectral Clustering & 71.5$\pm$2.2&78.9$\pm$1.0 &720.6 & 72.8$\pm$1.9& 78.7$\pm$0.9&492.2 &73.7$\pm$1.8 &77.4$\pm$1.3 & 273.5 \\
		&Variation Neighborhoods & 72.1$\pm$2.1 & 78.7$\pm$1.3  &8.3 & 72.7$\pm$2.0 &  79.1$\pm$1.2 & 9.4 & 74.5$\pm$1.9 &   79.7$\pm$1.0&12.6 \\
		&Variation Edges&  72.3$\pm$2.4  &78.9$\pm$1.0  &  2.8 &  73.4$\pm$1.9  &79.1$\pm$1.2 &  4.3  &  74.2$\pm$1.7  & 79.5$\pm$1.2  & 6.2\\
		&Algebraic JC & 72.5$\pm$2.3 &  78.6$\pm$1.6 &  3.0 &  73.1$\pm$2.0 & 78.3$\pm$1.1 & 5.5 &  74.0$\pm$1.7  &  79.1$\pm$1.2 &7.3\\
		&Affinity GS & \textbf{73.2}$\pm$2.1 &\textbf{79.2}$\pm$1.6 & 135.7 & \textbf{73.9}$\pm$1.7 &  \textbf{79.6}$\pm$0.7 & 199.6 & \textbf{75.3}$\pm$1.6 &   \textbf{79.9}$\pm$1.1& 225.9\\
		\bottomrule
	\end{tabular}
\end{table*}

\begin{figure*}[tbp]
	\setlength{\abovecaptionskip}{0.0cm}
	\setlength{\belowcaptionskip}{-0.4cm}
	\centering
	\subfigure[GCN]
	{\includegraphics[width=0.22\textwidth]{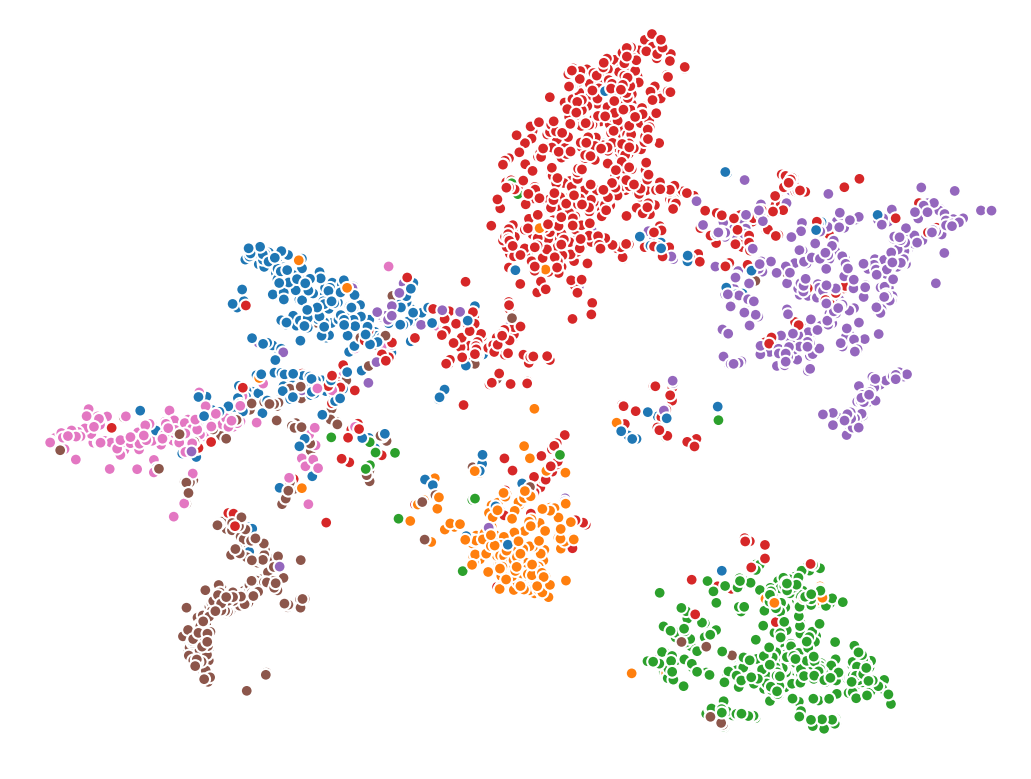}}
	\subfigure[GCN (c=0.7)]
	{\includegraphics[width=0.22\textwidth]{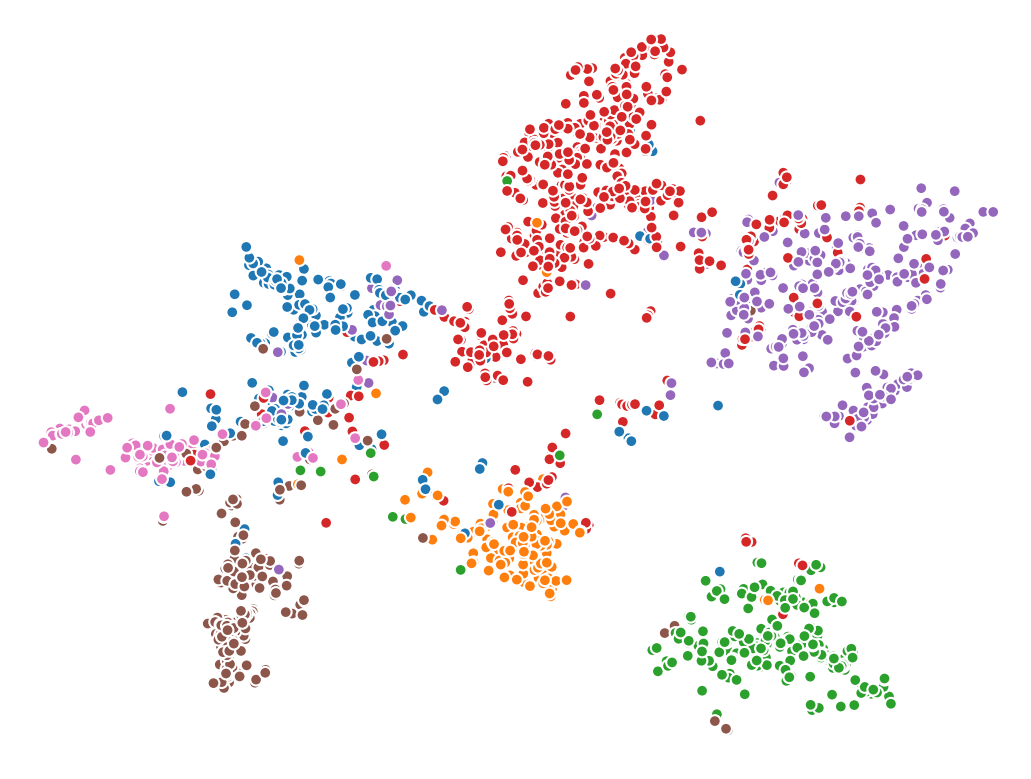}}
	\subfigure[GCN (c=0.5)]
	{\includegraphics[width=0.22\textwidth]{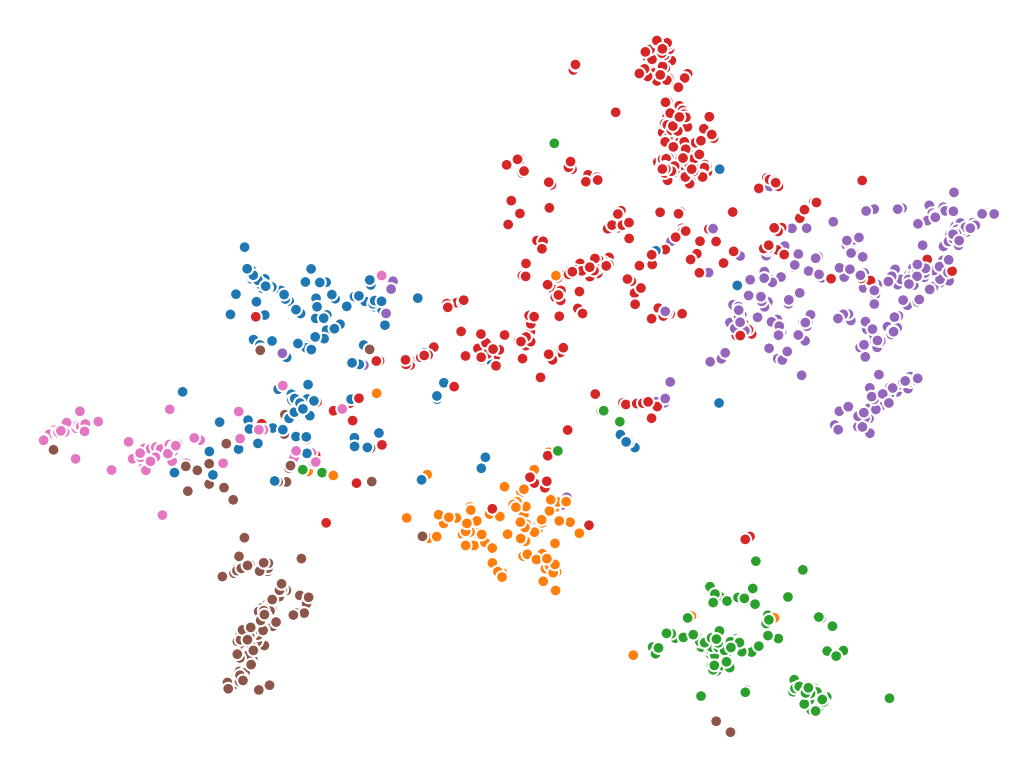}}
	\subfigure[GCN (c=0.3)]
	{\includegraphics[width=0.22\textwidth]{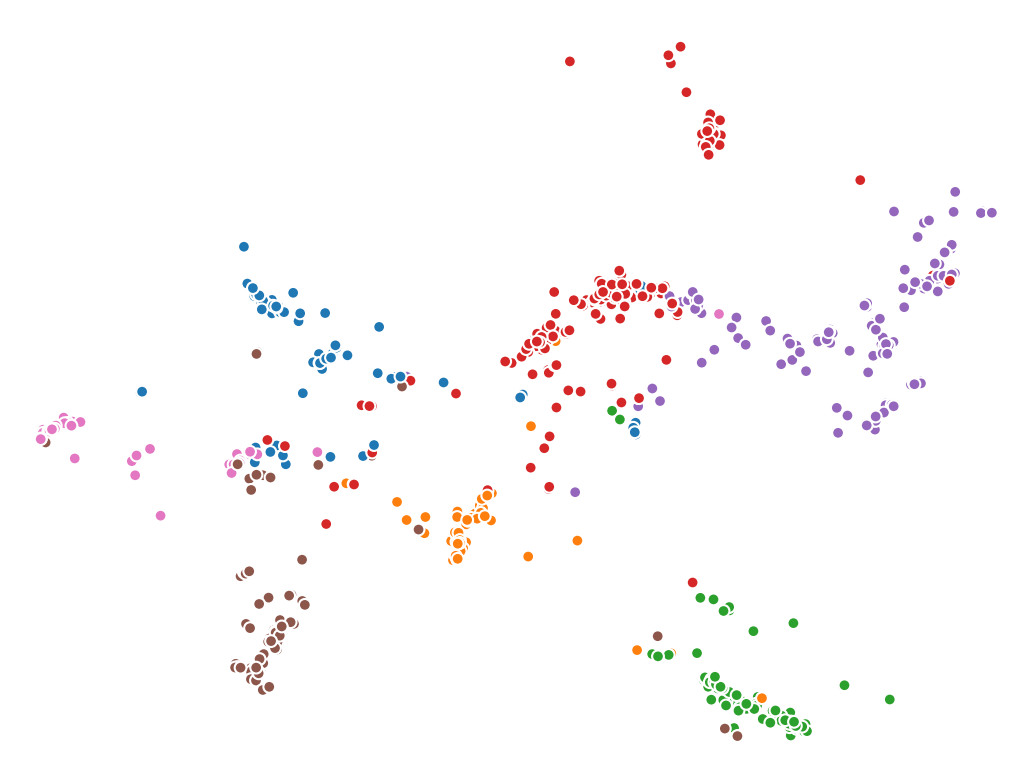}}
	\subfigure[APPNP]
	{\includegraphics[width=0.22\textwidth]{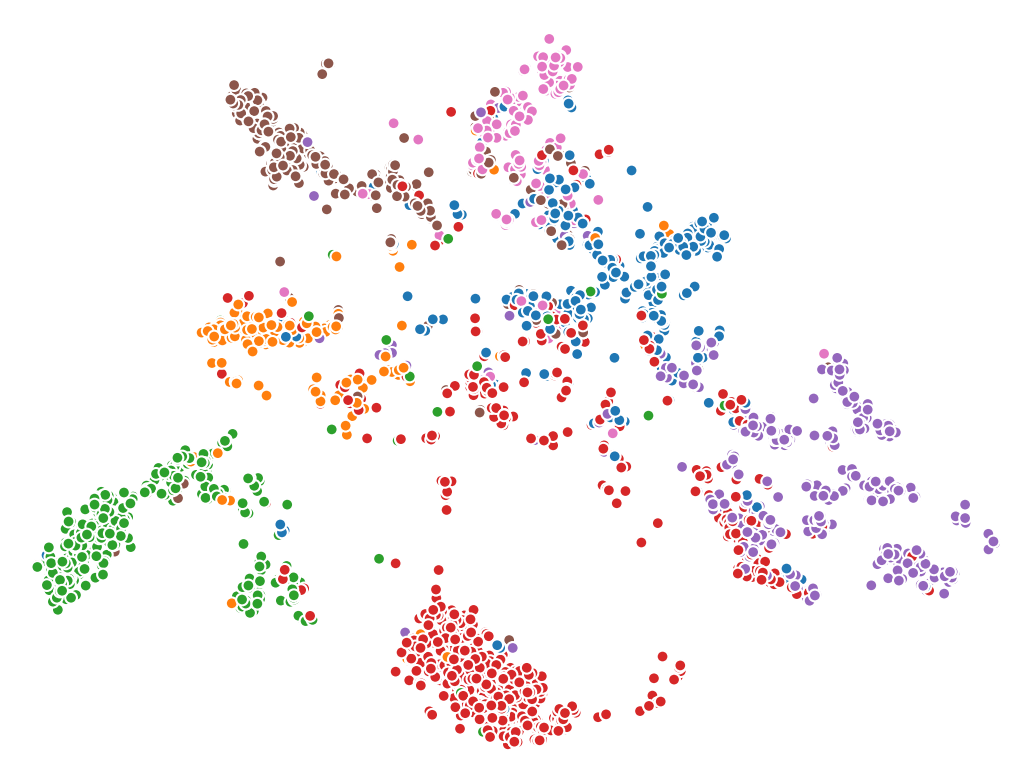}}
	\subfigure[APPNP (c=0.7)]
	{\includegraphics[width=0.22\textwidth]{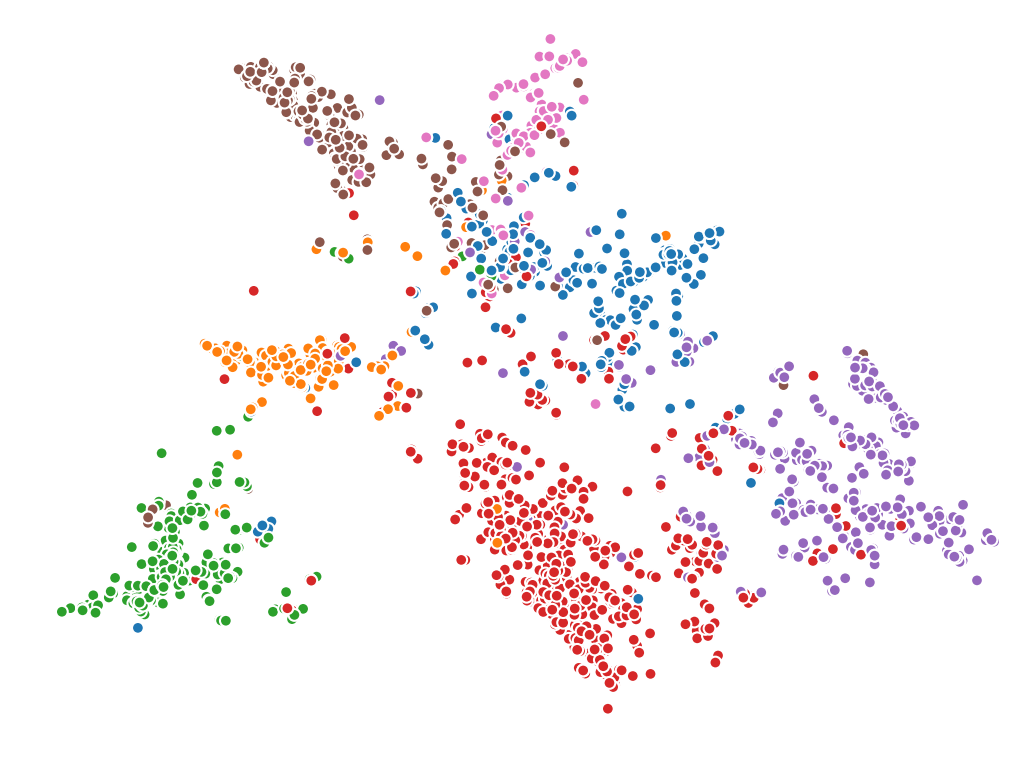}}
	\subfigure[APPNP (c=0.5)]
	{\includegraphics[width=0.22\textwidth]{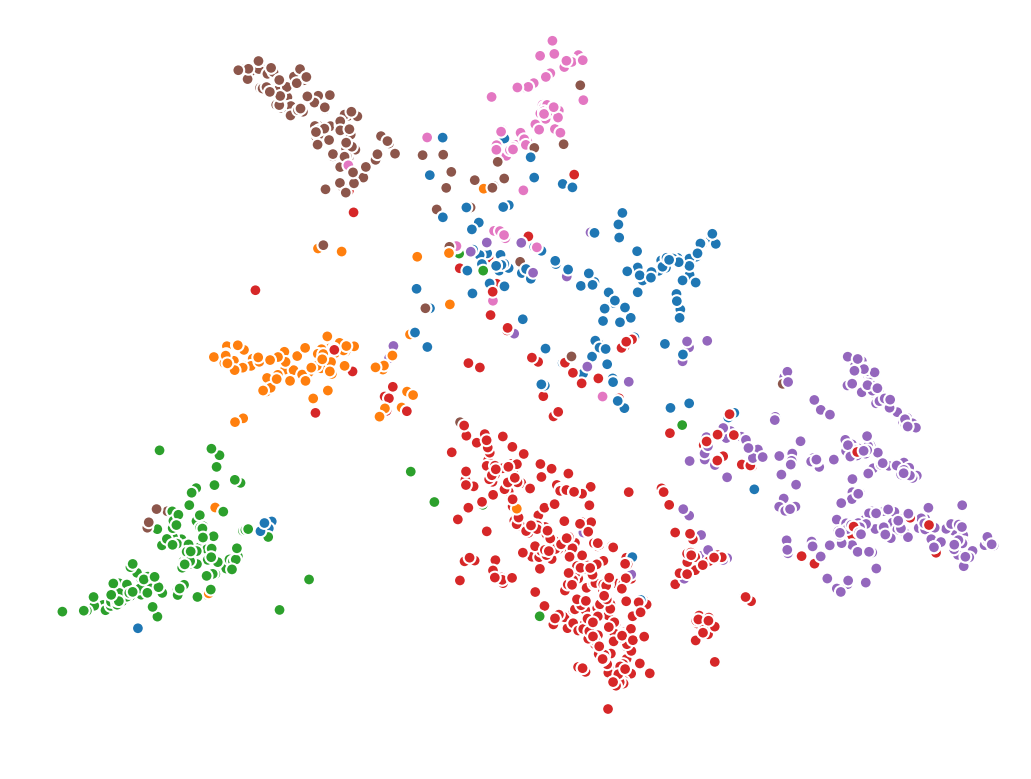}}
	\subfigure[APPNP (c=0.3)]
	{\includegraphics[width=0.22\textwidth]{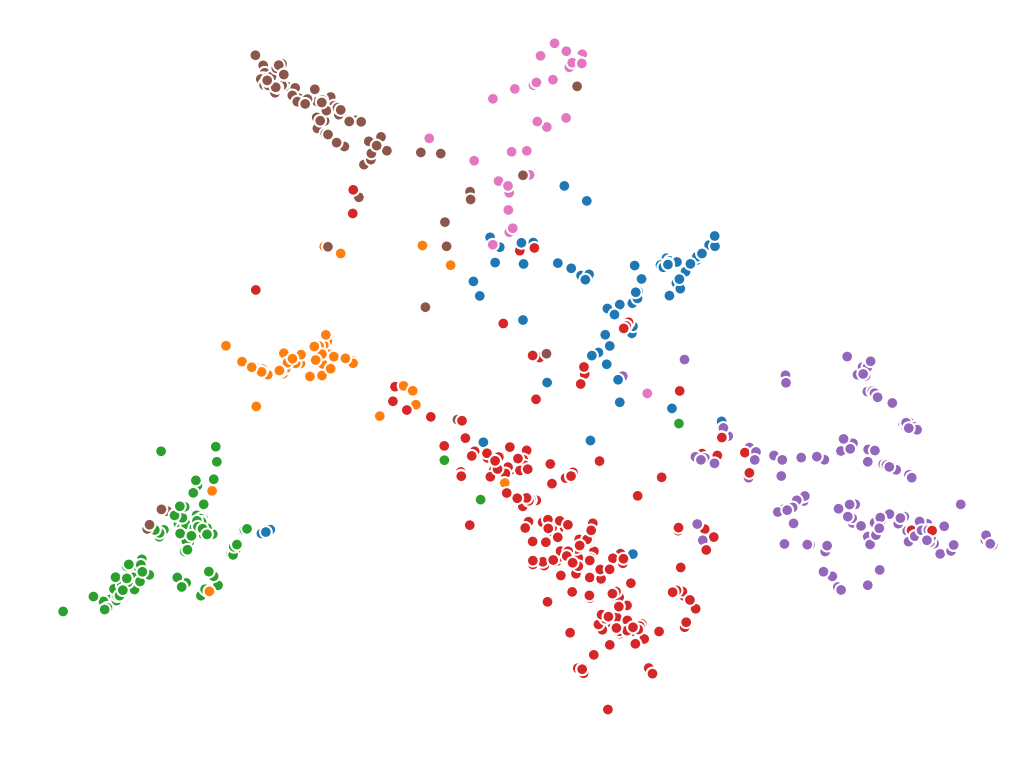}}
	\caption{Visualization of embeddings with t-SNE.}
	\label{fig:vis}
\end{figure*}

\subsection{Results and Analysis}

Table \ref{tab:result} presents the node classification accuracy and standard deviation of different coarsening ratios. The memory usages are summarized in Figure \ref{fig:memory usage}.

\noindent\textbf{Performance of GCN.} Our results demonstrate that coarse GCN achieves good performance across five datasets under diffenernt experimental settings. In most cases, the coarsening operation will not reduce the accuracy by much. Interestingly, the best result for all settings (except for the public split on Pubmed) is not achieved on full-graph training. This verifies our hypothesis on the regularization effect of graph coarsening. It is also observed that, when the coarsening ratio is 0.3, the performance of GCN is competitive against full-graph training; actually, the performance is improved on 7 out of 10 settings. Even when the graph is reduced by 10 times, the performance is still comparable and in 6 out of 10 cases, the accuracy is higher than or the same as using full-graph training. 

\noindent\textbf{Performance of APPNP.} For APPNP, we observe similar phenomenons as for GCN, even though the performance gain is not as noticeable as that on GCN. The resuts clearly are clearly consistent with our theoretical analysis.

\noindent\textbf{Memory Usage.} Figure \ref{fig:memory usage} shows the memory usage of APPNP with different coarsening ratios; The memory usages of GCN are very similar to APPNP, and thus we omit the results on GCN. Compared with the size of the input tensor, the space occupied by the parameters is very small, so the proportion of the space occupied by the coarse APPNP is close to the coarsening rate.

\noindent\textbf{Visualization.} We provide visualizations of the output layer with t-SNE for qualitative analyses. Here, we present the visualization results with different coarsening ratios on Cora in Figure \ref{fig:vis}, where nodes with the same color are from the same class. We clearly observe that, even though the number of nodes are different for each coarsening ratio, the overall distribution of node embeddings are quite similar across all ratios. This qualitatively verifies the theoretical analysis on the approximation quality of graph coarsening.

\subsection{Studies on Different Coarsening Methods}
Here we also study the efficacy of different coarsening methods for the proposed framework. We test the classification performance of four coarsening methods discussed in \cite{loukas2019graph} together with spectral clustering on Cora and DBLP. The four coarsening methods from \cite{loukas2019graph} are Variation Neighborhoods, Variation Edges, Algebraic JC and Affinity GS. In order to compare fairly, we use the same network structure and hyperparameters.

Table \ref{tab:coarsening methods} shows the result of different coarsening methods. Except for spectral clustering, there is no obvious difference between other coarsening methods. Compared with other methods, Variation Neighborhoods has best overall testing accuracies, and the coarsening time of variation neighborhoods is also acceptable. Variation Edge and Algebraic JC are competitive in classification accuracies, and their computational time is faster than Variation Neighborhoods.The time of spectral clustering is high mainly because the number of clusters in the $k$-means steps is large,and we can observe that the time goes down as the coarsening ratio gets lower.

\section{Conclusion}
In this paper, we propose a different approach, which use graph coarsening, for scalable training of GNNs. Our method is generic, extremely simple and has sublinear training time and space. We present rigorous theoretical analysis on the effect of using coarsening operations and provides useful guidance on the choice of coarsening methods. Interestingly, our theoretical analysis shows that coarsening can also be considered as a type of regularization and may improve the generalization.  Finally, empirical results on real world datasets show that, simply applying off-the-shelf coarsening methods, we can reduce the number of nodes by up to a factor of ten without causing a noticeable downgrade in classification accuracy. To sum up, this paper adds a new and simple technique in the toolbox for scaling up GNNs; from our theoretical analysis and empirical studies, it proves to be highly effective.

\section{Acknowledgments}
This work is supported by Shanghai Sailing Program Grant No. 18YF1401200, National Natural Science Foundation of China Grant No. 61802069, Shanghai Science and Technology Commission Grant No. 17JC1420200, and Science and Technology Commission of Shanghai Municipality Project Grant No. 19511120700.
 
\bibliographystyle{ACM-Reference-Format}
\bibliography{paper}

\newpage
\appendix
\section{Appendix}
Here we describe more details about the experiments to help in reproducibility.

\textbf{Datasets.} See Table \ref{tab:datasets} for a concise summary of the five datasets. The nodes in the networks are documents, each having a sparse bag-of-words feature vector; the edges represents citation links between documents.
\begin{table}[htbp]
	\setlength{\abovecaptionskip}{0.2cm}
	\setlength{\belowcaptionskip}{-0.2cm}
	\caption{Summary of the datasets used in our experiments}.	
	\begin{tabular}{lcccc}\toprule \centering
		\textbf{Dataset}&\textbf{Nodes}&\textbf{Edges}&\textbf{Features}&\textbf{Classes} \\
		\midrule
		Cora&2,708&5,429 &1,433&7 \\
		Citeseer&3,327&4,732&3,703&6  \\
		Pubmed&19,717&44,338&500&3  \\
		Coauthor Physics&34,493&247,962&8,415&5  \\
		DBLP&17,716&52,867&1,639&4  \\
		\bottomrule
	\end{tabular}
	\label{tab:datasets}
	\centering
\end{table}

\textbf{Hyperparameters.} For the coarse GCN, we use Adam optimizer with learning rates of $[0.01, 0.01, 0.01, 0.001, 0.01]$ and a $L_2$ regularization with weights $[0.0005, 0.0005, 0.0005, 0, 0.0005]$. The number of training epochs are $[60, 200, 200, 200, 50]$ and the early stopping is set to $10$. For the coarse APPNP, $\alpha$ is set to $[0.1, 0.1, 0.1, 0.1, 0.05]$ and the number of layers is set to $[10, 10, 10, 20, 20]$ respectively. We use Adam optimizer with learning rates of $[0.01, 0.01, 0.01, 0.0005, 0.01]$ and a $L_2$ regularization with weights $[0.0005, 0.0005, 0.0005, 0, 0.0005]$. The number of training epochs are $[200, 200, 200, 500, 200]$ and the early stopping is set to $[10, 10, 10, 10, 0]$. The source code can be found in https://github.com/szzhang17/Scaling-Up-Graph-Neural-Networks-Via-Graph-Coarsening.

\textbf{Configuration.}
All the models are implemented in Python and PyTorch Geometric. Experiments are conducted on an NVIDIA 2080 Ti GPU, Intel(R) Core(TM) i7-10750H CPU@2.60GHz and Intel(R) Xeon(R) Silver 4116 CPU@2.10GHz. 
\end{document}